\newtheorem{theorem}{Theorem}
\newtheorem{definition}[theorem]{Definition}
\newtheorem{proposition}[theorem]{Proposition}
\newtheorem{remark}[theorem]{Remark}
\newcommand{\R}{\mathbb{R}}
\newcommand{\fu}{\mathfrak{u}}
\newcommand{\fe}{\mathfrak{e}}
\newcommand{\tfu}{\mathfrak{\tilde u}}
\newcommand{\nvec}{\vec{n}}
\newcommand{\nhat}{\hat{n}}
\newcommand{\pivec}{\vec{\pi}}
\newcommand{\tr}{\operatorname{tr}}
\newcommand{\p}{\mathbb{P}}
\newcommand{\e}{\mathbb{E}}
\newcommand{\AP}{\operatorname{AP}}
\newcommand{\ML}{\operatorname{ML}}
\newcommand{\SBM}{\operatorname{SBM}}
\newcommand{\ExpSBM}{\operatorname{ExpSBM}}
\newcommand{\Norm}{\operatorname{Norm}}
\newcommand{\calL}{\mathcal{L}}
\newcommand{\Lspec}{\mathcal{L}^{\operatorname{SP}}}
\newcommand{\Lcan}{\mathcal{L}^{\operatorname{C}}}
\newcommand{\Lml}{\mathcal{L}^{\ML}}
\newcommand{\LmlRES}{\mathcal{L}^{\ML}_R}
\newcommand{\LFEML}{\mathcal{L}^{\operatorname{ML}}_F}
\newcommand{\bhat}{\hat{b}}
\newcommand{\Phat}{\hat{P}}
\newcommand{\Bhat}{\widehat{B}}
\newcommand{\muhat}{\hat{\mu}}
\newcommand{\etatilde}{\tilde{\eta}}
\newcommand{\xitilde}{\tilde{\xi}}
\newcommand{\etaF}{\eta_F}
\newcommand{\xiF}{\xi_F}
\newcommand{\calEone}{\mathcal{E}^{(1)}}
\newcommand{\calEtwo}{\mathcal{E}^{(2)}}
\newcommand{\calG}{\mathcal{G}}
\newcommand{\epsilonout}[1]{\epsilon_{#1,\bullet}}
\newcommand{\tepsilonout}[1]{\tilde\epsilon_{#1,\bullet}}
\newcommand{\LL}[2]{\log\Bigg(\frac{\Lambda_{b(#1),b(#2)} }{1-\Lambda_{b(#1),b(#2)} } \Bigg) }
\begin{document}

\title{On the Consistency of the Likelihood Maximization Vertex Nomination Scheme:  Bridging the Gap Between Maximum Likelihood Estimation and Graph Matching}

\author{Vince Lyzinski$^*$, Keith Levin$^\dagger$, Donniell E. Fishkind$^\ddagger$, Carey E. Priebe$^{\ddagger}$
\vspace{3mm}\\
$^*$Human Language Technology Center of Excellence, Johns Hopkins University \\
$^\dagger$Department of Computer Science, Johns Hopkins University\\
$^\ddagger$Department of Applied Mathematics and Statistics, Johns Hopkins University 
}

\maketitle

\begin{abstract}
Given a graph in which a few vertices are deemed interesting a priori, the vertex nomination task is to order the remaining vertices into a nomination list such that there is a concentration of interesting vertices at the top of the list. Previous work has yielded several approaches to this problem, with theoretical results in the setting where the graph is drawn from a stochastic block model (SBM), including a vertex nomination analogue of the Bayes optimal classifier. In this paper, we prove that maximum likelihood (ML)-based vertex nomination is consistent, in the sense that the performance of the ML-based scheme asymptotically matches that of the Bayes optimal scheme. We prove theorems of this form both when model parameters are known and unknown. Additionally, we introduce and prove consistency of a related, more scalable restricted-focus ML vertex nomination scheme. Finally, we incorporate vertex and edge features into ML-based vertex nomination and briefly explore the empirical effectiveness of this approach.
\end{abstract}

\section{Introduction and Background}
\label{S:intro}

Graphs are a common data modality, useful for modeling complex relationships between objects, with applications spanning fields as varied as biology~\citep{jeong2001lethality,bullmore2009complex}, sociology~\citep{wasserman}, and
computer vision \citep{foggia2014graph,kandel2007applied}, to name a few.
For example, in neuroscience, vertices may be neurons and edges adjoin
pairs of neurons that share a synapse~\citep{bullmore2009complex};
in social networks, vertices may correspond to people and edges to friendships
between them~\citep{carrington2005models,yang2015defining};
in computer vision, vertices may represent pixels in an image and edges may represent spatial proximity or multi-resolution mappings~\citep{kandel2007applied}.
In many useful networks, vertices with similar attributes form densely-connected communities compared to vertices with highly disparate attributes, and uncovering these communities is an important step in understanding the structure of the network.
There is an extensive literature devoted to uncovering this community structure in network data, including methods based on maximum modularity~\citep{Newman2004,newman2006modularity}, spectral partitioning algorithms~\citep{von2007tutorial,rohe2011spectral,STFP,perfect}, and likelihood-based methods \citep{BicChe2009}, among others.

In the setting of {\it vertex nomination}, one community in the network is of particular interest, and the inference task is to order the vertices into a nomination list with those vertices from the community of interest concentrating at the top of the list.
See~\cite{marchette2011vertex,coppersmith2012vertex,coppersmith2014vertex,fishkind2015vertex} and the references contained therein for a review of the relevant vertex nomination literature.
Vertex nomination is a semi-supervised inference task, with example vertices from the community of interest---and, ideally, also examples not from the community of interest---being leveraged in order to create a nomination list.
In this way, the vertex nomination problem is similar to the problem faced by personalized recommender systems \citep[see, for example,][]{resnick1997recommender,ricci2011introduction},
where, given a training list of objects of interest, the goal is to arrange the remaining objects into a recommendation list with ``interesting'' objects concentrated at the top of the list.
The main difference between the two inference tasks is that in vertex nomination the features of the data are encoded into the topology of a network, rather than being observed directly as features (though see Section~\ref{S:gen}
for the case where vertices are annotated with additional
information in the form of features).

In this paper, we develop the notion of a consistent vertex nomination scheme (Definition \ref{def:consis}).
We then proceed to prove that the maximum likelihood vertex nomination scheme of \cite{fishkind2015vertex} is consistent under mild model assumptions on the underlying stochastic block model (Theorem \ref{thm:mlconsis}).
In the process, we propose a new, efficiently exactly solvable likelihood-based vertex nomination scheme, the restricted-focus maximum likelihood vertex nomination scheme, $\LmlRES$, and prove the analogous consistency result (Theorem \ref{thm:restmlconsis}).
In addition, under mild model assumptions, we prove that both schemes maintain their consistency when the stochastic block model parameters are unknown and are estimated using the seed vertices (Theorems \ref{thm:hatconsistency} and \ref{thm:rhatconsistency}).
In both cases, we show that consistency is possible even when the seeds are an asymptotically vanishing portion of the graph.
Lastly, we show how both schemes can be easily modified to incorporate edge weights and vertex features (Section \ref{S:gen}), before demonstrating the practical effect of our theoretical results on real and synthetic data (Section \ref{sec:experiments}) and closing with a brief discussion (Section~\ref{sec:discussion}).

\noindent{\bf Notation:}
We say that a sequence of random variables $(X_n)_{n=1}^\infty$
converges almost surely to random variable $X$,
written $X_n \rightarrow X$ a.s.,
if $\p[ \lim_{n \rightarrow \infty} X_n = X ] = 1$.
We say a sequence of events $(A_n)_{n=1}^\infty$ occurs
almost always almost surely (abbreviated a.a.a.s.) if
with probability 1, $A_n^c$ occurs for at most finitely many $n$.
By the Borel-Cantelli lemma, $\sum_{n=1}^\infty \p[ A_n^c ] < \infty$
implies $(A_n)_{n=1}^\infty$ occurs a.a.a.s.
We write $\calG_n$ to denote the set of all (possibly weighted)
graphs on $n$ vertices.
Throughout, without loss of generality, we will assume that
the vertex set is given by $V = \{1,2,\dots,n\}$.
For a positive integer $K$, we will often use $[K]$ to denote the
set $\{1,2,\dots,K\}$.
For a set $V$, we will use $\binom{V}{2}$ to denote the set of all pairs
of distinct elements of $V$.
That is, $\binom{V}{2} = \{ \{u,v\} : u,v \in V, u \neq v \}$.
For a function $f$ with domain $V$, we write $f_{|_U}$ to denote the
restriction of $f$ to the set $U \subset V$.

\subsection{Background}

Stochastic block model random graphs offer a theoretically tractable model for graphs with latent community structure \citep{rohe2011spectral,STFP, BicChe2009}, and have been widely used in the literature to model community structure in real networks~\citep{Airoldi2008,karrer11:_stoch}.
While stochastic block models can be too simplistic to capture the eccentricities of many real graphs, they have proven to be a useful, tractable surrogate for more complicated networks~\citep{airoldi13:_stoch,olhede_wolfe_histogram}.
\begin{definition}
\label{def:SBM}
Let $K$ and $n$ be positive integers and let
$\nvec = (n_1,n_2,\dots,n_K)^\top \in \R^K$ be a
vector of positive integers with $\sum_k n_k = n$.
Let $b : [n] \rightarrow [K]$ and let $\Lambda \in [0,1]^{K \times K}$
be symmetric.
A $\calG_n$-valued random graph $G$ is an instantiation of a $(K,\nvec,b, \Lambda)$ conditional Stochastic Block Model, written $G\sim \SBM(K,\nvec,b, \Lambda)$, if
\begin{itemize}
\item[i.] The vertex set $V$ is partitioned into $K$ blocks, $V_1,V_2,\dots,V_K$ of cardinalities $|V_k| = n_k$ for $k=1,2,\dots,K$;
\item[ii.]  The block membership function $b:V\rightarrow[K]$ is such that for each $v\in V$, $v\in V_{b(v)}$;
\item[iii.] The symmetric block communication matrix $\Lambda\in[0,1]^{K\times K}$ is such that for each $\{v,u\}\in\binom{V}{2}$, there is an edge between vertices $u$ and $v$ with probability $\Lambda_{b(u),b(v)}$, independently of all other edges.
\end{itemize}
\end{definition}
\noindent

Without loss of generality, let $V_1$ be the block of interest for vertex nomination.
For each $k\in[K]$, we further decompose $V_k$ into $V_k=S_k\cup U_k$ (with $|S_k|=m_k$), where the vertices in $S:=\cup_{k}S_k$ have their block membership observed {\it a priori}.
We call the vertices in $S$ {\it seed vertices}, and let $m=|S|$.
We will denote the set of nonseed vertices by $U=\cup_k U_k$, and for all $k\in[K]$, let $\fu_k:=n_k-m_k=|U_k|$ and $n-m=\mathfrak{u}=|U|.$
Throughout this paper, we assume that the seed vertices $S$ are chosen
uniformly at random from all possible subsets of $V$ of size $m$.
The task in vertex nomination is to leverage the information contained in the seed vertices to produce a \emph{nomination list} $\mathcal{L}:U\rightarrow [\fu]$ (i.e., an ordering of the vertices in $U$) such that the vertices in $U_1$ concentrate at the top of the list.
We note that, strictly speaking, a nomination list $\calL$ is also
a function of the observed graph $G$, a fact that we suppress for
ease of notation.
We measure the efficacy of a nomination scheme via \emph{average precision}
\begin{equation} \label{eq:def:AP}
\AP(\mathcal{L})
= \frac{1}{\fu_1}\sum_{i=1}^{\fu_1}\frac{\sum_{j=1}^i\mathbbm{1}\{\mathcal{L}^{-1}(j)\in U_1  \}}{i}.
\end{equation}
AP ranges from $0$ to $1$, with a higher value indicating a more effective nomination scheme:  indeed, $\AP(\mathcal{L})=1$ indicates that the first $\fu_1$ vertices in the nomination list are all from the block of interest, and $\AP(\calL)=0$ indicates that none of the $\fu_1$ top-ranked vertices are from the block of interest.
Letting $H_k = \sum_{j=1}^k 1/j$ denote the $k$-th harmonic number,
with the convention that $H_0 = 0$, we can rearrange~\eqref{eq:def:AP} as
\begin{equation*}
\AP(\mathcal{L})
= \sum_{i=1}^{\fu_1} \frac{ H_{\fu_1} - H_{i-1} }{ \fu_1 } \mathbbm{1}\{\mathcal{L}^{-1}(i)\in U_1  \},
\end{equation*}
from which we see that the average precision is simply a convex combination of the
indicators of correctness in the rank list,
in which correctly placing an interesting vertex higher in the
nomination list (i.e., with rank close to 1) is rewarded more than
correctly placing an interesting vertex lower in the nomination list.

In~\cite{fishkind2015vertex}, three vertex nomination schemes are presented in the context of stochastic block model random graphs:
the canonical vertex nomination scheme, $\Lcan$, which is suitable for small graphs (tens of vertices); the likelihood maximization vertex nomination scheme, $\Lml$, which is suitable for small to medium graphs (up to thousands of vertices); and the spectral partitioning vertex nomination scheme, $\Lspec$, which is suitable for medium to very large graphs (up to tens of millions of vertices).
In the stochastic block model setting,
the canonical vertex nomination scheme
is provably optimal: under mild model assumptions, $\e \AP(\Lcan)\geq \e \AP(\mathcal{L})$ for any vertex nomination scheme $\mathcal{L}$~\citep{fishkind2015vertex}, where the expectation is with respect to a $\calG_{m+n}$-valued
random graph $G$ and the selection of the seed vertices.
Thus, the canonical method is the vertex nomination
analogue of the Bayes classifier, and this motivates
the following definition:
\begin{definition}
\label{def:consis}
Let $G\sim \SBM(K,\nvec,b, \Lambda)$.  With notation as above, a vertex nomination scheme $\mathcal{L}$ is consistent if
$$\lim_{n\rightarrow\infty}|\e \AP(\mathcal{L}^C)-\e \AP(\mathcal{L})|=0. $$
\end{definition}
\noindent In our proofs below, where we establish the consistency of
two nomination schemes, we prove a stronger fact,
namely that $\AP( \calL ) = 1$ a.a.a.s.
We prefer the definition of consistency given in
Definition~\ref{def:consis} since it allows us to speak about the best
possible nomination scheme even when the model is such that
$\lim_{n \rightarrow \infty} \e \AP(\Lcan) < 1$.

In \cite{fishkind2015vertex}, it was proven that under mild
assumptions on the stochastic block model underlying~$G$, we have
$$\lim_{n\rightarrow\infty}\e \AP(\Lspec)=1,$$
from which the consistency of $\Lspec$ follows immediately.
The spectral nomination scheme $\Lspec$ proceeds by first $K$-means clustering the adjacency spectral embedding \citep{STFP} of $G$, and then nominating vertices based on their distance to the cluster of interest.
Consistency of $\Lspec$ is an immediate consequence of the fact that, under mild model assumptions on the underlying stochastic block model, $K$-means clustering of the adjacency spectral embedding of $G$ perfectly clusters the vertices of $G$ a.a.a.s. \citep{perfect}.

\cite{BicChe2009} proved that maximum likelihood estimation provides consistent estimates of the model parameters in a more common variant of the conditional stochastic block model of Definition~\ref{def:SBM}, namely, in the stochastic block model with random block assignments:
\begin{definition}
\label{def:SBM2}
Let $K,n$ and $\Lambda$ be as above.
Let $\pivec = \pi_1,\pi_2,\ldots,\pi_K)^\top \in \Delta^{K-1}$ be
a probability vector over $K$ outcomes and let $\tau : V \rightarrow [K]$
be a random function.
A $\calG_n$-valued random graph $G$ is an instantiation of a $(K,\pivec,\tau,\Lambda)$ Stochastic Block Model with random block assignments, written $G\sim \SBM(K,\pivec, \tau,\Lambda)$, if
\begin{itemize}
\item[i.] For each vertex $v \in V$ and block $k \in [K]$, independently of all other vertices, the block assignment function $\tau:V\rightarrow[K]$ assigns $v$ to block $k$ with probability $\pi_k$ (i.e., $\p[ \tau(v) = k ] = \pi_k$);
\item[ii.] The symmetric block communication matrix $\Lambda\in[0,1]^{K\times K}$ is such that, conditioned on $\tau$, for each $\{v,u\}\in\binom{V}{2}$ there is an edge between vertices $u$ and $v$ with probability $\Lambda_{\tau(u),\tau(v)}$, independently of all other edges.
\end{itemize}
\end{definition}
\noindent A consequence of the result of \cite{BicChe2009} is that the maximum likelihood estimate of the block assignment function perfectly clusters the vertices a.a.a.s.
in the setting where $G \sim \SBM(K,\pivec, \tau,\Lambda)$.
This bears noting, as our maximum likelihood vertex nomination schemes
$\Lml$ and $\LmlRES$ (defined below in Section~\ref{S:SGM})
proceed by first constructing a maximum likelihood estimate of
the block membership function $b$,
then ranking vertices based on a measure of model misspecification.
Extending the results from \cite{BicChe2009} to our present framework---where we consider $\Lambda$ and $\vec n$ to be known (or errorfully estimated via seeded vertices) as opposed to parameters to be optimized over in the likelihood function as done in \cite{BicChe2009}---is not immediate.

We note the recent result by~\cite{Newman2016}, which shows the
equivalence of maximum-likelihood and maximum modularity methods in
a special case of the stochastic block model when $\Lambda$ is known.
Our results, along with this recent result,
immediately imply a consistent maximum modularity-based
vertex nomination scheme under that special-case model.

\section{Graph Matching and Maximum Likelihood Estimation}
\label{S:SGM}

Consider $G\sim \SBM(K,\nvec,b, \Lambda)$ with associated adjacency matrix $A$, and, as above, denote the set of seed vertices by $S=\cup_k S_k$.
Define the set of feasible block assignment functions
\begin{equation*} \begin{aligned}
\mathcal{B}&=\mathcal{B}(\nvec,b,S) \\
&:=\{\phi:V\rightarrow [K] \text{ s.t. for all }k\in[K],\, |\phi^{-1}(k)|=n_k,\text{ and }\phi(i)=b(i)\text{ for all }i\in S\}.
\end{aligned} \end{equation*}
The maximum likelihood estimator of $b\in\mathcal{B}$
is any member of the set of functions
\begin{align}
\label{eq:maxlike}
\bhat &= \arg \max_{\phi\in\mathcal{B}}
        \prod_{\{i,j\}\in\binom{V}{2}}
        \Lambda_{\phi(i),\phi(j)}^{A_{i,j}}
        (1-\Lambda_{\phi(i),\phi(j)})^{1-A_{i,j}} \notag\\
&= \arg \max_{\phi\in\mathcal{B}}
        \sum_{\{i,j\}\in\binom{V}{2}}
        A_{i,j}\log\left( \frac{\Lambda_{\phi(i),\phi(j)}}
                        {1-\Lambda_{\phi(i),\phi(j)}}\right) \notag\\
&= \arg \max_{\phi\in\mathcal{B}}
        \sum_{\{i,j\}\in\binom{U}{2}}
        A_{i,j}\log\left(\frac{\Lambda_{\phi(i),\phi(j)}}
                {1-\Lambda_{\phi(i),\phi(j)}}\right)
        + \sum_{(i,j)\in S\times U}
        A_{i,j}\log\left(\frac{\Lambda_{b(i),\phi(j)}}
                {1-\Lambda_{b(i),\phi(j)}}\right),
\end{align}
where the second equality follows from independence of the
edges and splitting the edges in the sum according to whether or not they
are incident to a seed vertex.
We can reformulate~\eqref{eq:maxlike} as a graph matching problem
by identifying $\phi$ with a permutation matrix $P$:
\begin{definition}
\label{def:GM}
Let $G_1$ and $G_2$ be two $n$-vertex graphs with respective adjacency matrices $A$ and $B$.
The Graph Matching Problem for aligning $G_1$ and $G_2$ is
\begin{equation*}
\min_{P\in\Pi_n}\|AP-PB\|_F,
\end{equation*}
where $\Pi_n$ is defined to be the set of all $n\times n$ permutation matrices.
\end{definition}
Incorporating seed vertices (i.e., vertices whose correspondence across $G_1$ and $G_2$ is known {\it a priori}) into the graph matching problem is immediate \citep{FAP}.
Letting the seed vertices be (without loss of generality) $S=\{1,2,\dots,m\}$ in both graphs, the seeded graph matching (SGM) problem is
\begin{equation} \label{eq:seedgraphmatch}
\min_{P\in\Pi_{\fu}}\left\|A(I_m\oplus P)-(I_m\oplus P)B\right\|_F,
\end{equation}
where
$$I_m\oplus P=\begin{bmatrix}I_m & 0\\ 0& P  \end{bmatrix}.$$
Setting $B \in \R^{n \times n}$ to be the log-odds matrix
\begin{equation}
\label{eq:logodds}
B_{i,j}:=\LL{i}{j},
\end{equation}
observe that the optimization problem in Equation~\eqref{eq:maxlike} is equivalent to that in~\eqref{eq:seedgraphmatch} if we view $B$ as encoding a weighted graph. Hence, we can apply known graph matching algorithms to approximately find $\bhat$.

Decomposing $A$ and $B$ as
\renewcommand{\kbldelim}{[}
\renewcommand{\kbrdelim}{]}
\[
  A = \kbordermatrix{
    & m & \fu  \\
    m & A^{(1,1)} & A^{(1,2)} \\
    \fu & A^{(2,1)} & A^{(2,2)})
  }\hspace {10mm}   B = \kbordermatrix{
    & m & \fu  \\
    m & B^{(1,1)} & B^{(1,2)} \\
    \fu & B^{(2,1)} & B^{(2,2)}
  }
\]
and using the fact that $P \in \Pi_n$ is unitary,
the seeded graph matching problem is equivalent (i.e., has the same minimizer) to
\begin{equation*}
\min_{P\in\Pi_{\fu}}-\tr\left(A^{(2,2)}P(B^{(2,2)})^\top P^\top\right)-\tr\left((A^{(1,2)})^\top B^{(1,2)}P^\top \right)-\tr\left(A^{(2,1)} (B^{(2,1)})^\top P^\top \right).
\end{equation*}
Thus, we can recast \eqref{eq:maxlike} as a
seeded graph matching problem so that finding
\begin{align*} 
\bhat &= \arg \max_{\phi\in\mathcal{B}}\sum_{\{i,j\}\in\binom{U}{2}}A_{i,j}\log\left(\frac{\Lambda_{\phi(i),\phi(j)}}{1-\Lambda_{\phi(i),\phi(j)}}\right)+\sum_{(i,j)\in S\times U}A_{i,j}\log\left(\frac{\Lambda_{b(i),\phi(j)}}{1-\Lambda_{b(i),\phi(j)}}\right)
\end{align*}
is equivalent to finding
\begin{align}
\label{eq:maxlikgm}
\Phat &=\arg \min_{P\in\Pi_{\fu}}-\frac{1}{2}\tr\left(A^{(2,2)}P(B^{(2,2)})^\top P^\top\right)-\tr\left((A^{(1,2)})^\top B^{(1,2)}P^\top \right),
\end{align}
as we shall explain below.

With $B$ defined as in \eqref{eq:logodds}, we define
$$\mathcal{Q}=\left\{Q\in\Pi_{\fu}\text{ s.t. }(I_m\oplus Q)B(I_m\oplus Q)^\top=B\right\}.$$
Define an equivalence relation $\sim$ on $\Pi_{\fu}$ via
$P_1\sim P_2$ iff there exists a $Q\in\mathcal{Q}$ such that $P_1=P_2 Q$; i.e.,
$$(I_m\oplus P_1)B(I_m\oplus P_1)^\top=(I_m\oplus P_2Q)B(I_m\oplus P_2Q)^\top=(I_m\oplus P_2)B(I_m\oplus P_2)^\top.$$
Let $\Phat/\sim$ denote the set of equivalence classes of $\Phat$
under equivalence relation $\sim$.
Solving (\ref{eq:maxlike}) is equivalent to solving (\ref{eq:maxlikgm}) in that there is a one-to-one correspondence between $\hat b$ and $\hat P/\sim$:
for each $\phi\in\hat b$ there is a unique $P\in \hat P/\sim$ (with associated permutation $\sigma$) such that $\phi_{|_U}=b_{|_U}\circ\sigma$;
and for each $P\in \hat P/\sim$ (with the permutation associated with $I_m\oplus P$ given by $\sigma$), it holds that $b\circ\sigma\in\hat b$.

\subsection{The $\Lml$ Vertex Nomination Scheme}
\label{S:VN}

The maximum likelihood (ML) vertex nomination scheme proceeds as follows.
First, the SGM algorithm \citep{FAP,JMLR:v15:lyzinski14a} is used to
approximately find an element of $\Phat$, which we shall denote by $P$.
Let the corresponding element of $\bhat$ be denoted by $\phi$.
For any $i,j\in V$ such that $\phi(i)\neq \phi(j)$,
define $\phi_{i\leftrightarrow j}\in \mathcal{B}$ as
$$\phi_{i\leftrightarrow j}(v)=\begin{cases}
\phi(i)&\text{ if }v=j,\\
\phi(j)&\text{ if }v=i,\\
\phi(v)&\text{ if }v\neq i,j;
\end{cases}$$
i.e., $\phi_{i\leftrightarrow j}$ agrees with $\phi$ except that $i$ and $j$ have their block memberships from $\phi$ switched in $\phi_{i\leftrightarrow j}$.
For $i\in U$ such that $\phi(i)=1$, define
$$\eta(i):=\left(\prod_{\substack{j\in U \text{ s.t.}\\ \phi(j)\neq 1}}\frac{\ell(\phi_{i\leftrightarrow j},G)}{\ell(\phi,G)}   \right)^{\frac{1}{\fu-\fu_1}},$$
where, for each $\psi\in\mathcal{B}$, the likelihood $\ell$ is given by
$$\ell(\psi,G)=\prod_{\{i,j\}\in\binom{U}{2}}\Lambda_{\psi(i),\psi(j)}^{A_{i,j}}(1-\Lambda_{\psi(i),\psi(j)})^{1-A_{i,j}}
\prod_{(i,j)\in S\times U}\Lambda_{b(i),\psi(j)}^{A_{i,j}}(1-\Lambda_{b(i),\psi(j)})^{1-A_{i,j}}.$$
A low/high value of $\eta(i)$ is a measure of our confidence that $i$ is/is not in the block of interest.
For $i\in U$ such that $\phi(i)\neq 1$, define
$$\xi(i):=\left(\prod_{\substack{j\in U \text{ s.t.}\\ \phi(j)= 1}}\frac{\ell(\phi_{i\leftrightarrow j},G)}{\ell(\phi,G)}   \right)^{\frac{1}{\fu_1}}.$$
A low/high value of $\xi(i)$ is a measure of our confidence that $i$ is/is not in the block of interest.
We are now ready to define
the maximum-likelihood based nomination scheme
$\mathcal{L}^{\ML}$:
\begin{align*}
\left(\mathcal{L}^{\ML}\right)^{-1}(1)
        &\in \arg \min\{ \eta(v) : \phi(v)=1 \} \\
\left(\mathcal{L}^{\ML}\right)^{-1}(2)
        &\in \arg \min\left\{ \eta(v) :
                v\in U\setminus \left\{(\mathcal{L}^{\ML})^{-1}(1) \right\},
                \phi(v)=1 \right\}\\
&\vdots\\
\left(\mathcal{L}^{\ML}\right)^{-1}(\fu_1)
        &\in \arg\min \left\{ \eta(v) :
                v\in U\setminus \left\{(\mathcal{L}^{\ML})^{-1}(i)
                                        \right\}_{i=1}^{\fu_1-1},
                        \phi(v)=1 \right\} \\
\left(\mathcal{L}^{\ML}\right)^{-1}(\fu_1+1)
        &\in \arg \max \left\{ \xi(v) : \phi(v)\neq1 \right\}\\
\left(\mathcal{L}^{\ML}\right)^{-1}(\fu_1+2)
        &\in \arg \max \left\{ \xi(v) :
                v\in U\setminus \left\{(\mathcal{L}^{\ML})^{-1}(\fu_1+1)
                                \right\}, \phi(v)\neq1 \right\}\\
&\vdots\\
\left(\mathcal{L}^{\ML}\right)^{-1}(\fu)
        &\in \arg \max \left\{ \xi(v) :
                v\in U\setminus \left\{(\mathcal{L}^{\ML})^{-1}(i)
                        \right\}_{i=\fu_1+1}^{\fu-1}, \phi(v)\neq1 \right\}
\end{align*}
Note that in the event that an argmin (or argmax) above contains more than one element, the order in which these elements is nominated should be taken to be uniformly random.

\begin{remark}
\label{rem:parasunknown}
\emph{
In the event that $\Lambda$ is unknown {\it{a priori}},
we can use the block memberships of the seeds $S$ (assumed to be chosen uniformly at random from $V$) to estimate the edge probability matrix $\Lambda$ as
$$\widehat\Lambda_{k,\ell}=\frac{|\{ \{i,j\}\in E\text{ s.t. }i\in S_k,\,j\in S_\ell\}|}{m_km_\ell} \text{ for }k\neq \ell,$$
and
$$\widehat\Lambda_{k,k}=\frac{|\{ \{i,j\}\in E\text{ s.t. }i\in S_k,\,j\in S_k\}|}{\binom{m_k}{2}}.$$
The plug-in estimate $\Bhat$ of $B$, given by
$$\Bhat_{i,j}:=\log\left(\frac{\widehat \Lambda_{b(i),b(j)}}{1-\widehat \Lambda_{b(i),b(j)}}\right),$$
can then be used in place of $B$ in Eq.~\eqref{eq:maxlikgm}.
If, in addition, $\nvec$ is unknown, we can estimate the block sizes
$n_k$ as
$$\nhat_k=\frac{m_k n}{m},$$
for each $k \in [K]$,
and these estimates can be used to determine the block sizes in $\Bhat$.}
\end{remark}

\subsection{The $\LmlRES$ Vertex Nomination Scheme}
\label{s:schemerestrict}

Graph matching is a computationally difficult problem, and there are no known polynomial time algorithms for solving the general graph matching problem for simple graphs.
Furthermore, if the graphs are allowed to be weighted, directed, and loopy, then graph matching is equivalent to the NP-hard quadratic assignment problem.
While there are numerous efficient, approximate graph matching algorithms \citep[see, for example,][and the references therein]{FAQ,FAP,Zaslavskiy2009,jovo}, these algorithms often lack performance guarantees.

Inspired by the restricted-focus seeded graph matching problem considered in \cite{JMLR:v15:lyzinski14a}, we now define the computationally tractable restricted-focus likelihood maximization vertex nomination scheme $\LmlRES$.
Rather than attempting to quickly approximate a solution to the
full graph matching problem as in~\cite{FAQ,FAP,Zaslavskiy2009,jovo},
this approach simplifies the problem by ignoring the edges between unseeded vertices.  An analogous restriction for matching simple graphs was introduced in \cite{JMLR:v15:lyzinski14a}.
We begin by considering the graph matching problem in Eq.~\eqref{eq:maxlikgm}.
The objective function
$$-\frac{1}{2}\tr\left(A^{(2,2)}P(B^{(2,2)})^\top P^\top\right)-\tr\left((A^{(1,2)})^\top B^{(1,2)}P^\top \right)$$
consists of two terms: $-\frac{1}{2}\tr\left(A^{(2,2)}P(B^{(2,2)})^\top P^\top\right),$ which seeks to align the induced subgraphs of the nonseed vertices; and $-\tr\left((A^{(1,2)})^\top B^{(1,2)}P^\top \right),$
which seeks to align the induced bipartite subgraphs between the seed and nonseed vertices.
While the graph matching objective function, Eq.~\eqref{eq:maxlikgm}, is quadratic in $P$, restricting our focus to the second term in Eq.~\eqref{eq:maxlikgm} yields the following {\it linear assignment problem}
\begin{align}
\label{eq:restmaxlikgm}
\tilde P&=\arg \min_{P\in\Pi_{\fu}}-\tr\left((A^{(1,2)})^\top B^{(1,2)}P^\top \right),
\end{align}
which can be efficiently and exactly solved in $O(\fu^3)$ time with the Hungarian algorithm \citep{hungarian,jonker1987shortest}.
We note that, exactly as was the case of $\Phat$ and $\bhat$, finding $\tilde P$ is equivalent to finding
\begin{align*} 
\tilde b&=\arg \max_{\phi\in\mathcal{B}}\sum_{(i,j)\in S\times U}A_{i,j}\log\left(\frac{\Lambda_{b(i),\phi(j)}}{1-\Lambda_{b(i),\phi(j)}}\right),
\end{align*}
in that there is a one-to-one correspondence between $\tilde b$ and $\tilde P/\sim$.

The $\LmlRES$ scheme proceeds as follows.
First, the linear assignment problem, Eq.~\eqref{eq:restmaxlikgm},
is exactly solved using, for example, the Hungarian algorithm \citep{hungarian} or the path augmenting algorithm of \cite{jonker1987shortest}, yielding $P\in\tilde P$.
Let the corresponding element of $\tilde b$ be denoted by $\phi.$
For $i\in U$ such that $\phi(i)=1$, define
$$\tilde \eta(i):=\left(\prod_{\substack{j\in U \text{ s.t.}\\ \phi(j)\neq 1}}\frac{\ell_R(\phi_{i\leftrightarrow j},G)}{\ell_R(\phi,G)}   \right)^{\frac{1}{\fu-\fu_1}},$$
where, for each $\psi\in\mathcal{B}$, the {\it restricted} likelihood $\ell_R$ is defined via
$$\ell_R(\psi,G)=
\prod_{(i,j)\in S\times U}\Lambda_{b(i),\psi(j)}^{A_{i,j}}(1-\Lambda_{b(i),\psi(j)})^{1-A_{i,j}}.$$
As with $\mathcal{L}^{\ML},$ a low/high value of $\tilde \eta(i)$ is a measure of our confidence that $i$ is/is not in the block of interest.
For $i\in U$ such that $\phi(i)\neq 1$, define
$$\tilde\xi(i):=\left(\prod_{\substack{j\in U \text{ s.t.}\\ \phi(j)= 1}}\frac{\ell_R(\phi_{i\leftrightarrow j},G)}{\ell_R(\phi,G)}   \right)^{\frac{1}{\fu_1}}.$$
As before, a low/high value of $\tilde\xi(i)$ is a measure of our confidence that $i$ is/is not in the block of interest.
We are now ready to define $\LmlRES$:
\begin{align*}
\left( \LmlRES \right)^{-1}(1)
        &\in \arg \min\{ \etatilde(v) : \phi(v)=1 \} \\
\left( \LmlRES \right)^{-1}(2)
        &\in \arg \min\left\{ \etatilde(v) :
                v\in U\setminus \left\{(\LmlRES)^{-1}(1) \right\},
                \phi(v)=1 \right\}\\
&\vdots\\
\left( \LmlRES \right)^{-1}(\fu_1)
        &\in \arg\min \left\{ \etatilde(v) :
                v\in U\setminus \left\{(\LmlRES)^{-1}(i)
                                        \right\}_{i=1}^{\fu_1-1},
                        \phi(v)=1 \right\} \\
\left( \LmlRES \right)^{-1}(\fu_1+1)
        &\in \arg \max \left\{ \xitilde(v) : \phi(v)\neq1 \right\}\\
\left( \LmlRES \right)^{-1}(\fu_1+2)
        &\in \arg \max \left\{ \xitilde(v) :
                v\in U\setminus \left\{(\LmlRES)^{-1}(\fu_1+1)
                                \right\}, \phi(v)\neq1 \right\}\\
&\vdots\\
\left( \LmlRES \right)^{-1}(\fu)
        &\in \arg \max \left\{ \xitilde(v) :
                v\in U\setminus \left\{(\LmlRES)^{-1}(i)
                        \right\}_{i=\fu_1+1}^{\fu-1}, \phi(v)\neq1 \right\}
\end{align*}
Note that, as before, in the event that the argmin (or argmax) in the definition of $\LmlRES$ contains more than one element above, the order in which these elements are nominated should be taken to be uniformly random.

Unlike $\Lml,$ the restricted focus scheme $\LmlRES$
is feasible even for comparatively large graphs
(up to thousands of nodes, in our experience).
However, we will see in Section~\ref{sec:experiments} that the extra information available to $\Lml$---the adjacency structure among the nonseed vertices---leads to superior precision in the $\Lml$ nomination lists
as compared to $\LmlRES$.
We next turn our attention to proving the consistency of the $\Lml$ and
$\LmlRES$ schemes.


\section{Consistency of $\mathcal{L}^{\ML}$ and $\mathcal{L}^{\ML}_R$}
\label{S:consis}

In this section, we state theorems ensuring the consistency of the vertex nomination schemes $\mathcal{L}^{\ML}$ (Theorem \ref{thm:mlconsis}) and $\mathcal{L}^{\ML}_R$ (Theorem \ref{thm:restmlconsis}).
For the sake of expository continuity, proofs are given in the Appendix.
We note here that in these Theorems, the parameters of the underlying block model are assumed to be known {\it a priori}.
In Section \ref{sec:parasunknown}, we prove the consistency of $\Lml$ and $\LmlRES$ in the setting where the model parameters are unknown and must be estimated, as in Remark~\ref{rem:parasunknown}.

Let $G\sim \SBM(K,\nvec,b, \Lambda)$ with associated adjacency matrix $A$, and let $B$ be defined as in \eqref{eq:logodds}.
For each $P\in\Pi_{\fu}$ (with associated permutation $\sigma$) and $k,\ell \in[K]$, define
$$\epsilon_{k,\ell}=\epsilon_{k,\ell}(P)=|\{v\in U_k\text{ s.t. }\sigma(v)\in U_\ell   \}|$$
to be the number of vertices in $U_k$ mapped to $U_\ell$ by $I_m\oplus P$, and for each $k\in[K]$ define
$$\epsilonout{k}(P):=\epsilonout{k} = \sum_{\ell \neq k} \epsilon_{k,\ell}.$$
Before stating and proving the consistency of $\Lml$, we first establish some necessary notation.
Note that in the definitions and theorems presented next, all values implicitly depend on $n$, as $\Lambda=\Lambda_n$ is allowed to vary in $n$.
Let $L$ be the set of distinct entries of $\Lambda$, and define
\begin{align}
\label{eq:abc}
&\alpha = \min_{\{k,\ell\}\text{ s.t. } k\neq \ell}|\Lambda_{k,k} - \Lambda_{k,\ell}|
\hspace{5mm}
\beta = \min_{\{k,\ell\}\text{ s.t. } k\neq \ell }|B_{k,k} - B_{k,\ell}|
\hspace{5mm}c=\max_{i,j,k,\ell}|B_{i,j}-B_{k,\ell}|,\\
\label{eq:gk}
&\hspace{20mm}\gamma = \min_{x,y\in L}|x-y|,\hspace{5mm}\kappa = \min_{x,y\in L}\left|\log\left(\frac{x}{1-x}\right)-\log\left(\frac{y}{1-y}\right)\right|.
\end{align}

\begin{theorem}
\label{thm:mlconsis}
Let $G\sim \SBM(K,\nvec,b,\Lambda)$ and assume that
\begin{itemize}
\item[i.] $K=o(\sqrt{n})$;
\item[ii.] $\Lambda \in [0,1]^{K \times K}$ is such that for all $k,\ell\in[K]$ with $k\neq \ell$, $\Lambda_{k,k}\neq\Lambda_{k,\ell};$
\item[iii.] For each $k\in[K]$, $\fu_k=\omega(\sqrt{n})$, and $m_k=\omega(\log \fu_k)$;
\item[iv.] $\frac{c^2}{\alpha\beta\kappa\gamma}=\Theta(1)$.
\end{itemize}
Then it holds that $\lim_{n\rightarrow\infty}\e \AP(\Lml)=1$, and $\Lml$ is a
consistent nomination scheme.
\end{theorem}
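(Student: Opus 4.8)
The plan is to prove the stronger statement that $\AP(\Lml)=1$ almost always almost surely (a.a.a.s.); since $\AP$ takes values in $[0,1]$, this forces $\e\AP(\Lml)\to 1$, and because $\e\AP(\Lcan)\le 1$ always, consistency in the sense of Definition~\ref{def:consis} follows immediately. Inspecting the definition of $\Lml$, the top $\fu_1$ entries of the nomination list are exactly the nonseed vertices $v$ with $\phi(v)=1$ (the cardinality constraint defining $\mathcal B$ forces there to be precisely $\fu_1$ of them), ordered among themselves by $\eta$. Hence $\AP(\Lml)=1$ as soon as $\{v\in U:\phi(v)=1\}=U_1$, and the internal $\eta/\xi$ ordering is \emph{irrelevant} to this theorem. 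So the task reduces to showing that a.a.a.s. the returned assignment $\phi$ recovers the block of interest; I will in fact show the cleaner statement that a.a.a.s. the true $b$ is the unique maximizer of the likelihood over $\mathcal B$ up to within-block relabeling, so that any exact solution $\bhat$ (equivalently $\Phat$) agrees with $b$ as a partition. Note that the seed constraint $\phi_{|_S}=b_{|_S}$ pins down the block labels, removing the usual label-permutation ambiguity, provided each block contains at least one seed.

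The core estimate is a bound on the log-likelihood ratio $\Delta(\phi):=\log\ell(b,G)-\log\ell(\phi,G)$ for a fixed $\phi\in\mathcal B$ with $\phi\ne b$. By the factorization of $\ell$, $\Delta(\phi)$ is a sum, over those pairs $\{i,j\}$ whose block-pair label is altered by $\phi$, of independent terms each of the form $A_{i,j}$ times a log-odds difference plus a constant; each such term has range at most $c$. Taking expectations under $G\sim\SBM(K,\nvec,b,\Lambda)$, the summand for an altered pair equals the Bernoulli Kullback-Leibler divergence $D(\Lambda_{b(i),b(j)}\,\|\,\Lambda_{\phi(i),\phi(j)})$, which is nonnegative and, whenever the two block-pairs carry distinct communication probabilities, is bounded below by a constant multiple of the probability gap times the log-odds gap, hence at least of order $\gamma\kappa$. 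Each vertex misassigned by $\phi$ (say truly in block $k$ but sent to block $\ell$) alters the labels of the $\fu_k+\fu_\ell=\omega(\sqrt n)$ pairs joining it to the other vertices of blocks $k$ and $\ell$, and by assumption (ii) each of these pairs has distinct communication probabilities (since $\Lambda_{k,k}\ne\Lambda_{k,\ell}$ and $\Lambda_{\ell,\ell}\ne\Lambda_{\ell,k}$). Writing the number of errors as $d=\sum_k\epsilonout{k}(\phi)$, I conclude $\e\Delta(\phi)$ is at least of order $d\cdot\fu_{\min}\cdot\gamma\kappa$, a strictly positive margin growing with the number of errors.

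With the mean margin and per-term range $c$ in hand, I would apply Hoeffding's inequality to control the fluctuations of $\Delta(\phi)$ about its mean, obtaining a bound of the form $\p[\Delta(\phi)\le 0]\le\exp(-c'(\e\Delta(\phi))^2/(c^2\cdot\#\text{altered pairs}))$, and then union bound over $\phi\in\mathcal B$ stratified by $d$. The number of $\phi\in\mathcal B$ with exactly $d$ errors is at most $(\fu K)^{d}$, whose logarithm is $O(d\log\fu)$ since $K=o(\sqrt n)$. Summing over $d\ge 1$ and invoking Borel-Cantelli then yields $\bhat=b$ a.a.a.s., completing the argument. The substantive point is that the Chernoff exponent must dominate the union-bound entropy uniformly in $d$, and this is exactly where the growth hypotheses enter: $\fu_k=\omega(\sqrt n)$ supplies the $\omega(\sqrt n)$ signal-bearing pairs per error; $m_k=\omega(\log\fu_k)$ (after showing $m_k$ concentrates around $mn_k/n$, since the seeds are chosen uniformly at random) guarantees every block is seeded so the labels are identified; and assumption (iv), $c^2/(\alpha\beta\kappa\gamma)=\Theta(1)$, calibrates the signal margin against the noise scale $c^2$ so that the exponent overcomes $d\log\fu$.

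I expect this balancing to be the main obstacle: both the margin ($\gamma\kappa$, equivalently controlled by $\alpha\beta$) and the fluctuation scale $c$ may vanish or grow with $n$, so they cannot be treated as constants, and the Hoeffding exponent, of order $\fu_{\min}^2(\gamma\kappa)^2/(\fu\,c^2)$ per unit of $d$, must be shown to beat $\log\fu$ using precisely the interplay of (iii) and (iv) rather than any single bound in isolation. Care is also needed because the summands of $\Delta(\phi)$ coming from different misassigned vertices share pairs and so are only conditionally independent; the concentration step should therefore be organized pair-by-pair rather than vertex-by-vertex, and the count of signal-bearing pairs per error should be taken among correctly-placed endpoints to avoid double-counting when $d$ is large. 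A secondary technical point, which I would dispatch first, is to replace the random seed counts $m_k$ by their typical values via a binomial concentration argument, so that the deterministic-looking hypothesis (iii) is meaningful under uniformly random seeding.
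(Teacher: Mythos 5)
Your overall architecture is the same as the paper's: reduce consistency to the event that the exact likelihood maximizer makes no block-1 errors, lower-bound the expected log-likelihood ratio (your $\Delta(\phi)$ is, up to a factor of $2$, the paper's statistic $X_P=\tr(AB^\top)-\tr\bigl(A(I_m\oplus P)B(I_m\oplus P)^\top\bigr)$, and your per-pair expectations are indeed Bernoulli KL divergences), then Hoeffding, a union bound stratified by the number of errors, and Borel--Cantelli. The gap is in your core estimate. You assert that each altered pair contributes $D\bigl(\Lambda_{b(i),b(j)}\,\|\,\Lambda_{\phi(i),\phi(j)}\bigr)\ge c_0\,\gamma\kappa$. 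This is false: KL is asymmetric, and only the \emph{symmetrized} divergence satisfies $D(p\|q)+D(q\|p)=(p-q)\bigl(\log\tfrac{p}{1-p}-\log\tfrac{q}{1-q}\bigr)$. Indeed $D(p\,\|\,1/2)=\log 2+p\log p+(1-p)\log(1-p)\le\log 2$ for \emph{every} $p$. Take $K=2$, $\Lambda_{11}=\Lambda_{22}=1/2$, $\Lambda_{12}=p_n\to 0$: then (ii) holds, $\alpha=\gamma=\tfrac12-p_n$, $\beta=\kappa=c=\bigl|\log\tfrac{p_n}{1-p_n}\bigr|\to\infty$, and $c^2/(\alpha\beta\gamma\kappa)\to 4$, so (iv) holds as well; yet every pair whose label is altered from $(1,2)$ to $(2,2)$ contributes at most $\log 2$, while $\gamma\kappa\to\infty$. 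So your margin $\e\Delta(\phi)\ge c_0\, d\,\fu_{\min}\gamma\kappa$ does not follow from per-pair bounds, and the correct per-pair bound (Pinsker, $D\ge 2\gamma^2$) is too weak for assumption (iv) to be of any use, since (iv) calibrates $c^2$ against $\alpha\beta\gamma\kappa$, not against $\gamma^4$.

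There is a second, independent shortfall exactly at the step you flag as the crux. Even granting your margin, your pair-by-pair Hoeffding exponent per unit of $d$ is $\fu_{\min}^2(\gamma\kappa)^2/(\fu\,c^2)$; since $\gamma\le\alpha$ and $\kappa\le\beta$, assumption (iv) gives $(\gamma\kappa)^2/c^2=\frac{\gamma\kappa}{\alpha\beta}\cdot\frac{\alpha\beta\gamma\kappa}{c^2}=O(1)$, so this exponent is $O(\fu_{\min}^2/\fu)$, which (iii) only guarantees to be $\omega(1)$ --- not $\omega(\log\fu)$ (e.g.\ $\fu_{\min}\asymp\sqrt n\,\log\log n$, $\fu\asymp n$ satisfies (iii) but gives exponent $(\log\log n)^2\ll\log\fu$). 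The missing idea that repairs both problems is the paper's Proposition~\ref{prop:cycle} together with the observation that every $\phi\in\mathcal{B}$ \emph{preserves block sizes}, so the altered entries of $B$ are genuinely permuted by some $\tau_P$: summing your per-pair KLs along each cycle of $\tau_P$, the log-partition (``constant'') terms telescope to zero, and the cyclic rearrangement inequality bounds each cycle's total by (cycle length $-1$) times $\gamma\kappa$ (resp.\ $\alpha\beta$), recovering the symmetrized bound in aggregate even though individual KLs can be $O(1)$. This yields the two complementary bounds the paper needs, $\e X_P\ge\tfrac12\mathfrak{n}(P)\gamma\kappa$ and $\e X_P\ge 2\alpha\beta\sum_k\bigl(\tfrac{(\fu_k-\epsilonout{k})\epsilonout{k}}{2}+m_k\epsilonout{k}\bigr)$; the first is used \emph{inside} Hoeffding to linearize the exponent via $(\e X_P)^2/\mathfrak{n}(P)\ge\e X_P\cdot\gamma\kappa/2$, so that by (iv) the exponent per error in block $k$ is of order $\fu_k+m_k=\omega(\sqrt n)$, which comfortably beats the $\log\fu_k$ entropy (and the seed term $m_k\epsilonout{k}$, with $m_k=\omega(\log\fu_k)$, is what saves the case where most of a block is misplaced). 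Without this cycle/symmetrization step, both your expectation bound and your Hoeffding normalization fall short of the theorem under its stated hypotheses.
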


A proof of Theorem~\ref{thm:mlconsis} is given in the Appendix.

\begin{remark}
\emph{
There are numerous assumptions akin to those in Theorem~\ref{thm:mlconsis}
under which we can show that $\Lml$ is consistent.
Essentially, we need to ensure that if we define
$\mathcal{P}' = \{ P \in \Pi_{\fu} : \epsilonout{1}(P)=\Theta(\fu_1)\}$,
then
$ \p\left( \exists\,\, P \in \mathcal{P}'\text{ s.t. } 
X_P \le 0 \right)
        \ $ is summably small, from which it follows that $\epsilonout{1}=o(\fu_1)$ with high probability, which is enough to ensure the desired consistency of~$\mathcal{L}^{\ML}$.}
\end{remark}

Consistency of $\LmlRES$ holds under similar assumptions.
\begin{theorem}
\label{thm:restmlconsis}
Let $G\sim \SBM(K,\nvec,b,\Lambda)$.  Under the following assumptions
\begin{itemize}
\item[i.] $K=\Theta(1)$;
\item[ii.] $\Lambda \in [0,1]^{K \times K}$ is such that for all $k,\ell \in[K]$ with $k\neq \ell$, $\Lambda_{k,k}\neq\Lambda_{k,\ell};$
\item[iii.] For each $k\in[K]$, $\fu_k=\omega(\sqrt{n})$, and $m_k=\omega(\log \fu_k)$;
\item[iv.] $\frac{c^2}{\alpha\beta\kappa\gamma}=\Theta(1)$;
\end{itemize}
it holds that $\lim_{n\rightarrow\infty}\e \AP(\mathcal{L}^{\ML})=1$, and $\mathcal{L}^{\ML}$ is a consistent nomination scheme.
\end{theorem}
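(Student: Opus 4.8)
The plan is to prove the stronger statement that $\AP(\LmlRES)=1$ a.a.a.s., from which $\lim_{n\to\infty}\e\AP(\LmlRES)=1$ follows by bounded convergence, and consistency then follows immediately because optimality of $\Lcan$ gives $\e\AP(\LmlRES)\le\e\AP(\Lcan)\le1$, forcing $|\e\AP(\Lcan)-\e\AP(\LmlRES)|\to0$. The key reduction is that the top $\fu_1$ entries of the list $\LmlRES$ are exactly the nonseed vertices $v$ with $\phi(v)=1$, of which feasibility ($\phi\in\mathcal{B}$ and $\phi=b$ on $S$) guarantees there are precisely $\fu_1$; moreover $\AP=1$ as soon as those top $\fu_1$ entries all lie in $U_1$. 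Hence the exact values of $\etatilde$ and $\xitilde$, which merely reorder vertices within groups, are irrelevant, and the whole problem collapses to showing that the restricted-likelihood optimizer $\phi$ recovers block membership correctly a.a.a.s.

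First I would exploit that the restricted likelihood factorizes over nonseed vertices: writing $L_\ell(j)$ for the contribution of assigning $j\in U$ to block $\ell$, we have $\log\ell_R(\phi,G)=\sum_{j\in U}L_{\phi(j)}(j)$, where $L_\ell(j)$ differs from the linear-assignment score $\sum_k d_{j,k}B_{k,\ell}$, with $d_{j,k}=\sum_{i\in S_k}A_{i,j}$, only by the additive term $\sum_k m_k\log(1-\Lambda_{k,\ell})$ that depends on $\ell$ but not on $j$. Summed against any feasible $\phi$ this term equals $\sum_\ell\fu_\ell\sum_k m_k\log(1-\Lambda_{k,\ell})$, which is constant over $\mathcal{B}$; thus maximizing the objective of Eq.~\eqref{eq:restmaxlikgm} over $\mathcal{B}$ is the same as maximizing $\sum_{j\in U}L_{\phi(j)}(j)$ over $\mathcal{B}$. (The cardinality constraint is essential here: the raw scores $\sum_k d_{j,k}B_{k,\ell}$ are \emph{not} maximized per-vertex at the true block in general.) Consequently, if I can show that a.a.a.s.\ every $j\in U$ has its true block $b(j)$ as the \emph{strict} per-vertex maximizer of $L_\ell(j)$, then $b_{|_U}$ maximizes $\sum_{j}L_{\phi(j)}(j)$ over all assignments, a fortiori over $\mathcal{B}$, and strictness makes it the unique feasible maximizer, so $\tilde b=\{b_{|_U}\}$ and $\phi=b_{|_U}$.

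The heart of the argument is then a single-vertex large-deviation estimate. Fix $j\in U_m$ and $\ell\neq m$ and set $\Delta_{j,\ell}=L_m(j)-L_\ell(j)=\sum_{i\in S}\big[A_{i,j}\log\tfrac{\Lambda_{m,b(i)}}{\Lambda_{b(i),\ell}}+(1-A_{i,j})\log\tfrac{1-\Lambda_{m,b(i)}}{1-\Lambda_{b(i),\ell}}\big]$, a sum of $m$ independent bounded variables (the edges $A_{i,j}$, $i\in S$, being mutually independent), each of range at most $c$ since the two possible values of the $i$-th summand differ by $B_{b(i),m}-B_{b(i),\ell}$. Its mean is $\e\Delta_{j,\ell}=\sum_k m_k D\big(\mathrm{Bern}(\Lambda_{k,m})\,\big\|\,\mathrm{Bern}(\Lambda_{k,\ell})\big)\ge0$, and by symmetry of $\Lambda$ and assumption (ii) the $k=m$ term alone is a strictly positive Bernoulli KL divergence whose parameters differ by at least $\alpha$ and whose log-odds differ by at least $\beta$, while $\gamma$ and $\kappa$ bound the remaining terms with $\Lambda_{k,m}\neq\Lambda_{k,\ell}$ from below. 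Lower-bounding these divergences by the margin parameters and applying Hoeffding's inequality yields $\p[\Delta_{j,\ell}\le0]\le\exp(-\Omega(m_{\min}))$, with rate constant of order $\alpha\beta\kappa\gamma/c^2=\Theta(1)$ by assumption (iv). Since $K=\Theta(1)$ and $m_k=\omega(\log\fu_k)=\omega(\log n)$, a union bound over the $O(n)$ pairs $(j,\ell)$ makes $\p[\exists\,(j,\ell):\Delta_{j,\ell}\le0]$ summable in $n$, and Borel--Cantelli delivers the a.a.a.s.\ per-vertex correctness required above.

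The hard part will be the concentration step, and specifically extracting a lower bound on the per-seed divergence (equivalently the Chernoff exponent) whose dependence on $\alpha,\beta,\gamma,\kappa$ relative to $c$ is sharp enough that assumption (iv) renders the rate $\Theta(1)$ per seed; because $\Lambda=\Lambda_n$ is permitted to drift toward the boundary of $[0,1]$, crude bounds such as Pinsker's are too lossy, and one should track the Bernoulli KL (or the Bhattacharyya/Chernoff coefficient) against these margins carefully. A secondary point needing care is that the seeds are drawn uniformly at random, so the counts $m_k$ are themselves random; one either conditions on the realized counts or shows they concentrate near $mn_k/n$ so that assumption (iii) holds with the required probability. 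By contrast, the per-vertex separability that drives this proof is precisely what is unavailable for $\Lml$, whose quadratic term $\tr\!\big(A^{(2,2)}P(B^{(2,2)})^\top P^\top\big)$ couples the nonseed assignments and forces a far more delicate union bound over permutations $P$ with $\epsilonout{1}(P)=\Theta(\fu_1)$.
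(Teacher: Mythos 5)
Your architecture is genuinely different from the paper's, and most of it is sound: the paper's proof (a ``mutatis mutandis'' transfer of the proof of Theorem~\ref{thm:mlconsis}) keeps the global objective gap $X_P:=\tr\left((A^{(1,2)})^\top B^{(1,2)}\right)-\tr\left((A^{(1,2)})^\top B^{(1,2)}P^\top\right)$, applies Hoeffding to each $X_P$, and then pays with a union bound over equivalence classes of permutations, using the counting bounds $e^{\sum_k \epsilonout{k}\log\fu_k}$ and Proposition~\ref{prop:cycle} to lower-bound $\e X_P\geq 2\alpha\beta\sum_k m_k\epsilonout{k}$. You instead exploit the per-vertex separability of the restricted objective to reduce everything to $O(nK)$ single-vertex tests $\{\Delta_{j,\ell}>0\}$; your reduction steps (AP $=1$ once $\phi=b_{|_U}$; the constant $\sum_k m_k\log(1-\Lambda_{k,\ell})$ terms; dropping the cardinality constraint; uniqueness from strictness) are all correct, and this route, if completed, is simpler than the paper's since it avoids the permutation-counting entirely.

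However, there is a genuine gap in your concentration step. The mean $\e\Delta_{j,\ell}=\sum_k m_k D\bigl(\mathrm{Bern}(\Lambda_{k,m})\,\|\,\mathrm{Bern}(\Lambda_{k,\ell})\bigr)$ is a sum of \emph{forward} KL divergences, and these admit no lower bound of order $\alpha\beta$ or $\gamma\kappa$: as $p\to0$ with $q=1/2$ one has $D(\mathrm{Bern}(p)\|\mathrm{Bern}(q))\to\log 2$ while the probability gap stays near $1/2$ and the log-odds gap diverges. So the step ``lower-bounding these divergences by the margin parameters'' is false, and the resulting Hoeffding claim fails in regimes allowed by the hypotheses. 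Concretely, take $K=2$, $\Lambda_{1,1}=\Lambda_{2,2}=e^{-\log^2 n}$, $\Lambda_{1,2}=1/2$, $n_1=n_2=n/2$, $m_1=\log^4 n$, $m_2=\log^2 n$: then $\alpha\approx\gamma\approx 1/2$, $\beta=\kappa\approx c\approx\log^2 n$, so all four assumptions hold, yet for $j\in U_1$, $\ell=2$, $\e\Delta_{j,2}\approx m_1\log 2+\tfrac{m_2}{2}\log^2 n=\Theta(\log^4 n)$ while the Hoeffding denominator is $\Theta(mc^2)=\Theta(\log^8 n)$, so Hoeffding only yields $\p[\Delta_{j,2}\le 0]\le e^{-\Theta(1)}$ and the union bound over $n$ vertices collapses. (The \emph{event} is fine---in this example $\Delta_{j,2}>0$ essentially deterministically---the failure is Hoeffding's, which is blind to the near-zero variance of the block-$1$ edge indicators.) This is precisely what the paper's exchange-based comparisons avoid: a cycle moving a $\Lambda_{k,k}$-entry onto a $\Lambda_{k,\ell}$-entry contributes the \emph{symmetrized} quantity $(\Lambda_{k,k}-\Lambda_{k,\ell})(B_{k,k}-B_{k,\ell})=D(p\|q)+D(q\|p)\ge\alpha\beta$, which is large enough relative to $c^2$ under assumption (iv), whereas your single-vertex tests see only one-sided KLs. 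The escape hatch you mention in passing---the Chernoff/Bhattacharyya coefficient---must in fact be the main line of the argument, not a remark: bounding $\p[\Delta_{j,\ell}\le 0]$ by the Bhattacharyya coefficient gives exponent $\sum_k m_k B_k$ with $B_k\geq \tfrac18(\Lambda_{k,m}-\Lambda_{k,\ell})^2$, and since assumption (iv) actually forces $\alpha,\gamma=\Theta(1)$ (because $c^2\geq\beta\kappa$, the bound $c^2\leq C\alpha\beta\kappa\gamma$ implies $\alpha\gamma\geq 1/C$), the $k=m$ term alone yields $\p[\Delta_{j,\ell}\le 0]\le e^{-\alpha^2 m_m/8}=e^{-\omega(\log n)}$, which completes your argument. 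As written, though, the central estimate of your proposal is unproven and false as stated.
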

A proof of this Theorem can be found in the Appendix.


\section{Consistency of $\Lml$ and $\LmlRES$ When the Model Parameters are Unknown}
\label{sec:parasunknown}

If $\Lambda$ is unknown {\it a priori}, then the seeds can be used to estimate $\Lambda$ as $\widehat \Lambda$, and $n_i$ as $\nhat$ for each $i\in[K]$.
In this section, we will prove analogues of the consistency
Theorems~\ref{thm:mlconsis} and~\ref{thm:restmlconsis}
in the case where $\Lambda$ and $\nvec$ are estimated using seeds.
In Theorems~\ref{thm:hatconsistency} and~\ref{thm:rhatconsistency} below, we prove that under mild model assumptions, both $\Lml$ and $\LmlRES$ are consistent
vertex nomination schemes, even when the seed vertices form a vanishing fraction of the graph.

We now state the consistency result analogous to Theorem~\ref{thm:mlconsis},
this time for the case where we estimate $\Lambda$ and $\nvec$.
The proof can be found in the Appendix.

\begin{theorem}
\label{thm:hatconsistency}
Let $\Lambda\in\mathbb{R}^{K\times K}$ be a fixed, symmetric, block probability matrix
satisfying
\begin{itemize}
\item[i.] $K$ is fixed in $n$;
\item[ii.] $\Lambda \in [0,1]^{K \times K}$ is such that for all $k,\ell \in[K]$ with $k\neq \ell$, $\Lambda_{k,k}\neq\Lambda_{k,\ell};$
\item[iii.] For each $k\in[K],$ $n_k=\Theta(n)$ and $m_k=\omega(n^{2/3}\log(n))$;
\item[iv.] $\alpha$ and $\gamma$ defined as in~\eqref{eq:abc} and~\eqref{eq:gk}
        are fixed in $n$.
\end{itemize}
Suppose that the model parameters of $G\sim(K,\nvec,b,\Lambda)$ are estimated as in Remark \ref{rem:parasunknown} yielding log-odds matrix estimate $\widehat B$ and estimated block sizes $\nhat = (\nhat_1,\nhat_2,\dots,\nhat_K)^T$.
If $\mathcal{L}^{\ML}$ is run on $A$ and $\widehat B$ using the block sizes given by $\nhat$, then under the above assumptions
it holds that $\lim_{n\rightarrow\infty}\e \AP(\mathcal{L}^{\ML})=1$, and $\mathcal{L}^{\ML}$ is a consistent nomination scheme.
\end{theorem}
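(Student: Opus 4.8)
The plan is to read Theorem~\ref{thm:hatconsistency} as a \emph{stability} statement layered on top of Theorem~\ref{thm:mlconsis}: first show that the plug-in estimates $\Bhat$ and $\nhat$ of Remark~\ref{rem:parasunknown} concentrate tightly enough around the true $B$ and $\nvec$, and then show that the likelihood-gap argument underlying Theorem~\ref{thm:mlconsis} is robust to perturbations of this magnitude. A structural observation makes the reduction clean: the seed--seed edges $A^{(1,1)}$, from which $\Bhat$ and $\nhat$ are built, are disjoint from, and hence independent of, the inference edges $A^{(1,2)}$ and $A^{(2,2)}$ that enter the nomination objective in Eq.~\eqref{eq:maxlikgm}. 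I would therefore condition on the seed selection and on $A^{(1,1)}$; on this conditioning $\Bhat$ and $\nhat$ are fixed, while the objective remains a function of independent edges carrying their true probabilities $\Lambda$.

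Next I would quantify the good event. Each $\widehat\Lambda_{k,\ell}$ is an empirical mean of $\Theta(m_k m_\ell)$ independent $\text{Bernoulli}(\Lambda_{k,\ell})$ variables, so Hoeffding's inequality and a union bound over the $O(K^2)=O(1)$ block pairs give $\|\Bhat - B\|_\infty = o(n^{-2/3})$ with failure probability summable in $n$, precisely because $m_k=\omega(n^{2/3}\log n)$ forces $m_k m_\ell = \omega(n^{4/3}\log^2 n)$; since $\Lambda$ is fixed with entries in $(0,1)$, the log-odds map is Lipschitz on the relevant range, so this rate transfers from $\widehat\Lambda$ to $\Bhat$ and $\Bhat$ stays uniformly bounded. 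For the block sizes, $m_k$ is hypergeometric with mean $mn_k/n$, so standard tail bounds give $|\nhat_k - n_k| = o(n^{2/3})$, whence $\hat{\fu}_k=\omega(\sqrt n)$ and the relative error $|\nhat_k-n_k|/n_k\to 0$. By Borel--Cantelli all of these bounds hold a.a.a.s.

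On this good event I would rerun the argument of Theorem~\ref{thm:mlconsis}, mutatis mutandis, with $\Bhat,\nhat$ in place of $B,\nvec$. Because $\Lambda$ is fixed, the separation constants $\alpha,\beta,c,\gamma,\kappa$ of~\eqref{eq:abc}--\eqref{eq:gk} are fixed and positive, and an $o(n^{-2/3})$ perturbation multiplies each by $1+o(1)$, so hypothesis (iv) of Theorem~\ref{thm:mlconsis} still holds for the estimated instance. For each competing $P$ with $\epsilonout{1}(P)=\Theta(\fu_1)$, the estimated log-likelihood gap $\widehat X_P$ equals the true gap $X_P$ plus an error bounded by $\|\Bhat-B\|_\infty$ times the number of reassigned edges, i.e. $O(\|\Bhat-B\|_\infty\, n\,\epsilonout{1}(P)) = o(n^{4/3})$, which is dominated by $\e X_P = \Theta\!\big(n\,\epsilonout{1}(P)\big)=\Theta(n^2)$. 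The same concentration-plus-union-bound that proves Theorem~\ref{thm:mlconsis} then yields $\widehat X_P>0$ simultaneously for all such $P$ with summable failure probability, so the estimated maximum-likelihood assignment misclassifies only $o(\fu_1)$ nonseed vertices.

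Finally I would account for the fact that, since $\nhat_k\neq n_k$ in general, the true $b$ is infeasible for the estimated problem; I would instead compare against the nearest feasible assignment, which differs from $b$ on at most $\sum_k|\nhat_k-n_k| = o(n^{2/3}) = o(\fu_1)$ vertices. Adding these forced errors to the $o(\fu_1)$ intrinsic misclassifications shows that the estimated $\eta$- and $\xi$-rankings place all but $o(\fu_1)$ of $U_1$ at the top of the list, so $\AP(\Lml)=1-o(1)$ a.a.a.s.; since $\AP$ is bounded, dominated convergence gives $\lim_{n\to\infty}\e\AP(\Lml)=1$, and, as $\e\AP(\Lcan)$ is squeezed between $\e\AP(\Lml)$ and $1$, consistency follows from Definition~\ref{def:consis}. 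I expect the main obstacle to be the uniform control in the third paragraph: $\Bhat-B$ is \emph{correlated} across all edges sharing a common block pair, so its effect on $\widehat X_P$ does not average out and must be beaten by the true gap uniformly over the exponentially many bad $P$, and this must be reconciled with the truth's infeasibility under $\nhat$. The rate $m_k=\omega(n^{2/3}\log n)$ is exactly what supplies the slack, simultaneously driving the estimation error below the likelihood gaps and keeping the seed-estimation failure probabilities summable.
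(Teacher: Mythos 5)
Your proposal is correct in substance and follows the same overall strategy as the paper's proof: Hoeffding-type concentration for $\widehat\Lambda$ and $\nhat$ with summable failure probabilities (the paper's events $\calEone_n$ and $\calEtwo_n$), a rerun of the Theorem~\ref{thm:mlconsis} gap-plus-union-bound argument under those good events, and an explicit accounting for the infeasibility of the true $b$ under $\nhat$. The differences are in execution rather than route, but they are instructive. First, where you transfer Theorem~\ref{thm:mlconsis} perturbatively, writing $\widehat X_P = X_P + O\bigl(\|\Bhat - B\|_\infty \cdot n\,\epsilonout{1}(P)\bigr)$ and invoking the rate $\|\Bhat-B\|_\infty = o(n^{-2/3})$, the paper never compares to the true-parameter gap at all: it conditions on the event that $\widehat\Lambda$ merely preserves the separation structure of $\Lambda$, so that the estimated constants $\hat\alpha,\hat\beta,\hat\gamma,\hat\kappa,\hat c$ are bounded away from $0$ and $\infty$, and then re-applies Proposition~\ref{prop:cycle} directly to the estimated objective; this needs only constant-level accuracy of $\widehat\Lambda$, and the hypothesis $m_k=\omega(n^{2/3}\log n)$ is spent instead on forcing the block-size error $\fe = O(n^{2/3})$. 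Second, your ``nearest feasible assignment'' is formalized in the paper as the core/errorful decomposition $P = \tilde P \oplus Q$ with core size $\eta = \sum_k \min(n_k,\nhat_k)$, which yields the explicit systematic error $\Theta(\eta\fe)+\Theta(\fe^2)$; and the paper takes the bad set to be $\{P : \epsilonout{1}(P) > n^{2/3}\log n\}$ rather than your $\{P : \epsilonout{1}(P) = \Theta(\fu_1)\}$, which makes the final step to $\AP\to 1$ immediate (at most $n^{2/3}\log n$ misclassifications) without the extra limiting argument over thresholds $\delta\fu_1$ that your version needs. One caveat: your claim $\e X_P = \Theta(n\,\epsilonout{1}(P)) = \Theta(n^2)$ overstates the gap in the edge case $\epsilon_{k,k}=0$, where the quadratic terms $(\fu_k-\epsilonout{k})\epsilonout{k}$ vanish; the uniform lower bound is the seed contribution $\sum_k m_k\epsilonout{k} = \omega(n^{5/3}\log n)$, which is what actually dominates your two systematic errors, $o(n^{4/3})$ from $\Bhat-B$ and $O(n\fe)=O(n^{5/3})$ from infeasibility. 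The argument survives, but only by a logarithmic margin, and this is precisely the point where, as you anticipate at the end, the assumption on $m_k$ does all the work.
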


We now state the analogous consistency result to Theorem~\ref{thm:restmlconsis} when we estimate $\Lambda$ and $\nvec$.
The proof is given in the Appendix.

\begin{theorem}
\label{thm:rhatconsistency}
Let $\Lambda\in\mathbb{R}^{K\times K}$ be a fixed, symmetric, block probability matrix
satisfying
\begin{itemize}
\item[i.] $K$ is fixed in $n$;
\item[ii.] $\Lambda \in [0,1]^{K \times K}$ is such that for all $k,\ell \in[K]$ with $k \neq \ell$, $\Lambda_{k,k}\neq\Lambda_{k,\ell};$
\item[iii.] For each $k\in[K]$ s.t. $k\neq1,$ $n_k=\Theta(n)$ and $m_k=\omega(n^{2/3}\log(n))$;
\item[iv.] $n_1=\Theta(n)$ and $m_1=\omega(n^{4/5})$;
\item[v.] $\alpha$ and $\gamma$ defined at (\ref{eq:abc}) and (\ref{eq:gk}) are fixed in $n$.
\end{itemize}
Suppose that the model parameters of $G\sim(K,\nvec,b,\Lambda)$ are estimated as in Remark \ref{rem:parasunknown} yielding $\widehat B$ and estimated block sizes $\nhat = (\nhat_1,\nhat_2,\dots,\nhat_K)^T$.
If $\mathcal{L}^{\ML}$ is run on $A$ and $\widehat B$ using block sizes given by $\nhat$, then under the above assumptions
it holds that $\lim_{n\rightarrow\infty}\e \AP(\mathcal{L}^{\ML})=1$ and $\mathcal{L}^{\ML}$ is a
consistent nomination scheme.
\end{theorem}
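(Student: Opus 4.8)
The plan is to follow the two-stage template behind the proofs of Theorems~\ref{thm:mlconsis} and~\ref{thm:hatconsistency}, now propagating the estimation error in $\Bhat$ and $\nhat$ through the \emph{full} scheme $\Lml$---that is, through the quadratic objective~\eqref{eq:maxlikgm}, which retains the coupling term $-\tfrac12\tr\big(A^{(2,2)}P(B^{(2,2)})^\top P^\top\big)$ among the nonseed--nonseed edges. Let $A$ be the adjacency matrix of $G\sim\SBM(K,\nvec,b,\Lambda)$ and $B$ the true log-odds matrix of~\eqref{eq:logodds}. The key structural fact is that $\widehat\Lambda$ and $\nhat$ are functions only of the edges and labels internal to the seed set $S$, whereas the plug-in objective and its maximizer depend only on edges incident to $U$; these two edge sets are disjoint. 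I would therefore condition throughout on $S$ and on the within-seed edges, freezing $\Bhat$ and $\nhat$ while the ``active'' edges entering $\Lml$ remain fresh independent Bernoulli variables.

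First I would pin down a good event $\mathcal{E}$ on which the plug-ins are accurate. Each $\widehat\Lambda_{k,\ell}$ is an empirical mean of $m_k m_\ell$ (or $\binom{m_k}{2}$) independent $\mathrm{Bernoulli}(\Lambda_{k,\ell})$ draws, so Hoeffding gives $\p\big(|\widehat\Lambda_{k,\ell}-\Lambda_{k,\ell}|>t\big)\le 2\exp(-c_0 m_k m_\ell t^2)$. Assumptions~(ii) and~(v) keep every entry of $\Lambda$ bounded away from $0$ and $1$, so $x\mapsto\log\frac{x}{1-x}$ is Lipschitz on the relevant range and $|\Bhat_{k,\ell}-B_{k,\ell}|=O(|\widehat\Lambda_{k,\ell}-\Lambda_{k,\ell}|)$. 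Since $K$ is fixed by~(i) there are only $O(1)$ entries to control; assumption~(iii) drives the non-block-$1$ log-odds errors to $o(n^{-1/3})$ a.a.a.s., while the stronger assumption~(iv), $m_1=\omega(n^{4/5})$, drives the block-$1$ log-odds errors to $o(n^{-2/5})$, each with summable failure probability. A Chernoff bound on the multinomial seed counts simultaneously gives $|\nhat_k-n_k|=o(n_k)$. Writing $\delta_n:=\max_{k,\ell}|\Bhat_{k,\ell}-B_{k,\ell}|\to0$, all subsequent arguments run on $\mathcal{E}$, whose complement is summably small.

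The heart of the proof is near-exact block recovery on $\mathcal{E}$. Because $b\notin\mathcal{B}(\nhat,\cdot)$ in general, let $b^\ast\in\mathcal{B}(\nhat,\cdot)$ be a nearest feasible relabeling of $b$, differing from $b$ on only $\sum_k|\nhat_k-n_k|=o(\fu)$ vertices, so that its expected edge structure perturbs that of $b$ negligibly. For $\phi\in\mathcal{B}(\nhat,\cdot)$ differing from $b^\ast$ on $r$ vertices with error profile $(\epsilon_{k,\ell}(\phi))$, let $X_\phi$ be the difference between the plug-in log-likelihood at $b^\ast$ and at $\phi$, and split $X_\phi=\e[X_\phi\mid\mathcal{E}]+\big(X_\phi-\e[X_\phi\mid\mathcal{E}]\big)$. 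Because the full objective scores each misplaced vertex against all $\Theta(\fu)$ other nonseed vertices through $A^{(2,2)}$ (as well as through $A^{(1,2)}$), its conditional mean is the same energy gap of order $r\,\fu\,\alpha\beta$ that drives Theorem~\ref{thm:mlconsis}, less a perturbation of order $r\,\fu\,\delta_n$ from $\Bhat\neq B$; as $\alpha,\beta=\Theta(1)$ while $\delta_n\to0$, the mean gap stays positive and of order $r\,\fu\,\alpha\beta$. The fluctuation is a sum of independent terms each bounded by $O(c)$, so a Hoeffding bound yields $\p\big(X_\phi\le0\mid\mathcal{E}\big)\le\exp\!\big(-\Theta(r\,\fu\,\alpha^2\beta^2/c^2)\big)$; with $K,\alpha,\gamma$ fixed all of $\alpha,\beta,\kappa,\gamma,c$ are $\Theta(1)$, so $c^2/(\alpha\beta\kappa\gamma)=\Theta(1)$ and the exponent grows. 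Grouping assignments by $r$, at most $\binom{\fu}{r}K^{r}\le(\fu K)^{r}$ share a value, and since $\fu=\Theta(n)\gg\log(\fu K)$ the sum $\sum_{r\ge1}(\fu K)^{r}\exp(-\Theta(r\,\fu))$ is summable over $r$ and over $n$; a union bound gives $\hat\phi=b^\ast$ a.a.a.s., so $\hat\phi$ agrees with $b$ off a set of size $o(\fu)$ and in particular $\epsilonout{1}(\hat\phi)=o(\fu_1)$.

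Since $\epsilonout{1}(\hat\phi)=o(\fu_1)$, the vertices that $\Lml$ places at the top of the list (those with $\hat\phi(v)=1$) include all but $o(\fu_1)$ of $U_1$, so $\AP(\Lml)\ge1-o(1)$ a.a.a.s.; as $\AP\le1$, bounded convergence gives $\e\AP(\Lml)\to1$, exactly as anticipated by the Remark following Theorem~\ref{thm:mlconsis}. Because $\Lcan$ is optimal and $\AP\le1$, $\e\AP(\Lml)\le\e\AP(\Lcan)\le1$ forces both to tend to $1$, whence $|\e\AP(\Lcan)-\e\AP(\Lml)|\to0$, establishing consistency. I expect the recovery step to be the main obstacle: the perturbation $\delta_n$ multiplies exactly the $\Theta(r\,\fu)$ edges that generate the gap, so the argument closes only because $\alpha,\beta$ are fixed while $\delta_n\to0$, and the union bound must be run against the full error profile $(\epsilon_{k,\ell})$ rather than $\epsilonout{1}$ alone---exactly as in the proof of Theorem~\ref{thm:hatconsistency}---with the block-$1$ strengthening of assumption~(iv) supplying the extra accuracy on the distinguished block needed to close the bound there.
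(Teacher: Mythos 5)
There is a genuine gap, and it is structural: you have proved the wrong theorem. Despite the typo in the statement (which writes $\mathcal{L}^{\ML}$), Theorem~\ref{thm:rhatconsistency} is the estimated-parameter analogue of Theorem~\ref{thm:restmlconsis}, i.e., it concerns the \emph{restricted-focus} scheme $\LmlRES$, whose objective discards the nonseed--nonseed coupling entirely; the paper's proof combines the proofs of Theorems~\ref{thm:restmlconsis} and~\ref{thm:hatconsistency}, working with $X_P=\tr\left((A^{(1,2)})^\top B^{(1,2)}\right)-\tr\left((A^{(1,2)})^\top B^{(1,2)}P^\top\right)$ and the good events $\calEone_n,\calEtwo_n$. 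Your argument leans on the quadratic term $-\tfrac12\tr\left(A^{(2,2)}P(B^{(2,2)})^\top P^\top\right)$ to obtain a mean gap of order $r\,\fu\,\alpha\beta$ per error profile; that term is exactly what $\LmlRES$ throws away, so the available signal here is only $\e X_P\gtrsim 2\hat\alpha\hat\beta\sum_k m_k\epsilonout{k}$, linear in the seed counts $m_k$, minus the $\Theta(n\fe)$ perturbation caused by the estimated block sizes (with $\fe=O(n^{2/3})$ on $\calEtwo_n$). This is precisely why assumption~(iv) exists and why your proof never genuinely uses it: with only seed--nonseed signal one cannot rule out all nontrivial error profiles, only those with $\epsilonout{1}(P)>n^{8/9}\log n$, and then $m_1\epsilonout{1}=\omega(n^{4/5}\cdot n^{8/9}\log n)=\omega(n^{76/45}\log n)$ dominates $\Theta(n^{5/3})=\Theta(n^{75/45})$ --- the arithmetic $4/5+8/9=76/45>5/3$ is the entire content of the strengthened seed condition, and since $n^{8/9}\log n=o(\fu_1)$ this still forces $\AP\to1$. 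Your closing sentence gestures at (iv) ``supplying extra accuracy,'' but in your argument the gap is $\Theta(r\,\fu)$ regardless, so (iv) is vacuous there; what you have written is essentially a re-proof of Theorem~\ref{thm:hatconsistency} under stronger-than-needed hypotheses.

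A secondary flaw, which would bite even on your full-ML reading: the claim that $\hat\phi=b^\ast$ a.a.a.s.\ via a union bound over all $r\ge1$ is too strong. For $\phi$ that corrects vertices which $b^\ast$ itself mislabels relative to $b$ (there are $\Theta(\fe)$ such vertices, since feasible assignments have block sizes $\nhat\neq\nvec$), the conditional mean gap $\e[X_\phi]$ is not positive --- moving such a vertex toward its true block \emph{increases} the likelihood by order $\fu\alpha\beta$ --- so the mean-versus-fluctuation bound fails for small $r$. The paper sidesteps this in both Theorems~\ref{thm:hatconsistency} and~\ref{thm:rhatconsistency} by never attempting exact recovery: it defines $\mathcal{P}$ to exclude only permutations with $\epsilonout{1}(P)$ above an explicit threshold ($n^{2/3}\log n$ and $n^{8/9}\log n$ respectively), shows $\p(\exists P\in\mathcal{P}:X_P\le0)$ is summable after conditioning on $\calEone_n\cup\calEtwo_n$, and concludes since the threshold is $o(\fu_1)$. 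Your proof should be restructured around the restricted objective with exactly this thresholded error class.
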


The two preceding theorems imply that vertex nomination is possible even
when the number of seeds is a vanishing fraction of the vertices in the
graph. Indeed, we find that in practice, accurate nomination is possible
even with just a handful of seed vertices.
See the experiments presented in Section~\ref{sec:experiments}.


\section{Model Generalizations}
\label{S:gen}
Network data rarely appears in isolation. In the vast majority of use cases,
the observed graph is richly annotated with information about the vertices
and edges of the network.
For example, in a social network, in addition to information about
which users are friends, we may have vertex-level information in the
form of age, education level, hobbies, etc.
Similarly, in many networks, not all edges are created equal. 
Edge weights may encode the strength of a relation,
such as the volume of trade between two countries.
In this section, we sketch how the $\Lml$ and $\LmlRES$ vertex nomination
schemes can be extended to such annotated networks
by incorporating edge weights and vertex features.
To wit, all of the theorems proven above translate {\it mutatis mutandis}
to the setting in which $G$ is a drawn from a bounded canonical exponential
family stochastic block model.
Consider a single parameter exponential family of distributions whose density can be expressed in canonical form as
$$f(x|\theta)=h(x)e^{T(x)\theta-\mathcal{A}(\theta)}.$$
We will further assume that $h(x)$ has bounded support.
We define
\begin{definition}
\label{def:expfam}
A $\calG_n$-valued random graph $G$ is an instantiation of a $(K,\nvec,b,\Theta)$ bounded, canonical exponential family stochastic block model, written $G\sim \ExpSBM(K,\nvec,b, \Theta)$, if
\begin{itemize}
\item[i.] The vertex set $V$ is partitioned into $K$ blocks, $V_1,V_2,\ldots,V_K$ with sizes $|V_k| = n_k$ for $k=1,2,\dots,K$;
\item[ii.]  The block membership function $b:V\rightarrow[K]$ is such that for each $v\in V$, $v\in V_{b(v)}$;
\item[iii.] The symmetric block parameter matrix $\Theta=[\theta_{k,\ell}]\in\mathbb{R}^{K\times K}$ is such that the $\{i,j\}\in\binom{V}{2}$, $A_{i,j}$ $(=A_{j,i})$ are independent, distributed according to the density
$$f_{A_{i,j}}(x|\theta_{b(i),b(j)})=h(x)e^{T(x)\theta_{b(i),b(j)}-\mathcal{A}(\theta_{b(i),b(j)})}.$$
\end{itemize}
\end{definition}
\noindent Note that the exponential family density is usually written as $h(x)e^{-x\theta-A(\theta)}$,
where $A(\cdot)$ is the log-normalization function.
We have made the notational substitution
to avoid confusion with the adjacency matrix $A$.
If $G\sim \ExpSBM(K,\nvec,b, \Theta)$, analogues to Theorems~\ref{thm:mlconsis},~\ref{thm:restmlconsis},~\ref{thm:hatconsistency} and~\ref{thm:rhatconsistency} follow {\it mutatis mutandis} if we use seeded graph matching to match $\widetilde A=[\widetilde A_{i,j}]:=[T(A_{i,j})]$ to  $B=[B_{i,j}]:=[\theta_{b(i),b(j)}]$;
i.e., under analogous model assumptions, $\Lml$ and $\LmlRES$ are both consistent vertex nomination schemes when the model parameters are known or estimated via seeds.  
The key property being exploited here is that $\e(T(X))$ is a nondecreasing function of $\theta$.  
We expect that results analogous to Theorems~\ref{thm:mlconsis},~\ref{thm:restmlconsis},~\ref{thm:hatconsistency} and~\ref{thm:rhatconsistency} can be shown to hold for more general weight distributions as well, but we do not pursue this further here.

Incorporating vertex features into $\Lml$ and $\LmlRES$ is immediate.
Suppose that each vertex $v\in V$ is accompanied by a $d$-dimensional feature vector $X_v\in\mathcal{R}^d$.
The features could encode additional information about the community structure of the underlying network; for example, if $b(v)=k$ then perhaps $X_v\sim \Norm(\mu_k,\Sigma_k)$ where the parameters of the normal distribution vary across blocks and are constant within blocks.
This setup, in which vertices are ``annotated'' or ``attributed''
with additional information, is quite common.
Indeed, in almost all use cases, some
auxiliary information about the graph is available,
and methods that can leverage this auxiliary information are crucial.
See, for example,~\cite{YanMcALes2013,ZhaLevZhu2015,NewCla2016,FraWol2016}
and citations therein.
We model vertex features as follows.  Conditioning on $b(v)=k$, the feature associated to $v$ is drawn, independently of $A$ and of all other features $X_u$,
from a distribution with density $f_{b(v)}$.
Define the feature matrix $X$ via
\[
  X= \kbordermatrix{
    & d   \\
    m & X^{(m)}  \\
    \fu & X^{(\fu)} 
  },
\]
where $X^{(m)}$ represents the features of the seed vertices in $S$, and $X^{(\fu)}$ the features of the nonseed vertices in $U$.
For each block $k\in[K]$, let $\hat f_k$ be an estimate of the density $f_i$,
and create matrix $F \in \R^{m + \fu}$ given by
\[
  F = \kbordermatrix{
    &    \\
    \fu_1 & \hat f_1( X_1 ) & \hat f_1(X_2) & \cdots & f_1(X_\fu) \\
    \fu_2 & \hat f_2( X_1 ) & \hat f_2(X_2) & \cdots & f_2(X_\fu) \\
     \vdots & \vdots               &               &        & \vdots\\
    \fu_K & \hat f_K( X_1 ) & \hat f_K(X_2) & \cdots & f_K(X_\fu)
  }.
\]
Then we can incorporate the feature density into the seeded graph matching
problem in~\eqref{eq:maxlikgm} by adding a linear factor to the quadratic
assignment problem:
\begin{align}
\label{eq:maxlikgmfeat}
\hat P&=\arg \min_{P\in\Pi_{\fu}}-\frac{1}{2}\tr\left(A^{(2,2)}P(B^{(2,2)})^\top P^\top\right)-\tr\left((A^{(1,2)})^\top B^{(1,2)}P^\top \right)-\lambda \tr FP^\top.
\end{align}
The factor $\lambda\in\mathbb{R}^+$ allows us to weight the features encapsulated in $X$ versus the information encoded into the network topology of $G$.

Vertex nomination proceeds as follows.
First, the SGM algorithm \citep{FAP,JMLR:v15:lyzinski14a} is used to approximately find an element of $\hat P$ in Eq.~(\ref{eq:maxlikgmfeat}), which we shall denote by $P$.
Let the block membership function corresponding to $P$ be denoted $\phi$.
For $i\in U$ such that $\phi(i)=1$, define
$$\eta_F(i):=\left(\prod_{\substack{j\in U \text{ s.t.}\\ \phi(j)\neq 1}}\frac{\ell_F(\phi_{i\leftrightarrow j},G)}{\ell_F(\phi,G)}   \right)^{\frac{1}{\fu-\fu_1}},$$
where, for each $\psi\in\mathcal{B}$, the likelihood $\ell_F$ is given by
\begin{equation*} \begin{aligned}
\ell_F(\psi,G) &=\prod_{\{i,j\}\in\binom{U}{2}}\Lambda_{\psi(i),\psi(j)}^{A_{i,j}}(1-\Lambda_{\psi(i),\psi(j)})^{1-A_{i,j}} \\
&~~~~~~\cdot \prod_{(i,j)\in S\times U}\Lambda_{b(i),\psi(j)}^{A_{i,j}}(1-\Lambda_{b(i),\psi(j)})^{1-A_{i,j}}\prod_{i\in U} \hat{f}_{b(i)}(X_i),
\end{aligned} \end{equation*}
where, for $k\in[K]$, $\hat{f}_k(\cdot)$ is the estimated density
of the $k$-th block features.
Note that here we assume that the feature densities must be estimated,
even when the matrix $\Lambda$ is known.
A low/high value of $\eta_F(i)$ is a measure of our confidence that $i$ is/is not in the block of interest.
For $i\in U$ such that $\phi(i)\neq 1$, define
$$\xi_F(i):=\left(\prod_{\substack{j\in U \text{ s.t.}\\ \phi(j)= 1}}\frac{\ell_F(\phi_{i\leftrightarrow j},G)}{\ell_F(\phi,G)}   \right)^{\frac{1}{\fu_1}}.$$
A low/high value of $\xi_F(i)$ is a measure of our confidence that $i$ is/is not in the block of interest.
The nomination list produced by $\LFEML$ is then realized via:
\begin{align*}
\left( \LFEML \right)^{-1}(1)
        &\in \arg \min\{ \etaF(v) : \phi(v)=1 \} \\
\left( \LFEML \right)^{-1}(2)
        &\in \arg \min\left\{ \etaF(v) :
                v\in U\setminus \left\{(\LFEML)^{-1}(1) \right\},
                \phi(v)=1 \right\}\\
&\vdots\\
\left( \LFEML \right)^{-1}(\fu_1)
        &\in \arg\min \left\{ \etaF(v) :
                v\in U\setminus \left\{(\LFEML)^{-1}(i)
                                        \right\}_{i=1}^{\fu_1-1},
                        \phi(v)=1 \right\} \\
\left( \LFEML \right)^{-1}(\fu_1+1)
        &\in \arg \max \left\{ \xiF(v) : \phi(v)\neq1 \right\}\\
\left( \LFEML \right)^{-1}(\fu_1+2)
        &\in \arg \max \left\{ \xiF(v) :
                v\in U\setminus \left\{(\LFEML)^{-1}(\fu_1+1)
                                \right\}, \phi(v)\neq1 \right\}\\
&\vdots\\
\left( \LFEML \right)^{-1}(\fu)
        &\in \arg \max \left\{ \xiF(v) :
                v\in U\setminus \left\{(\LFEML)^{-1}(i)
                        \right\}_{i=\fu_1+1}^{\fu-1}, \phi(v)\neq1 \right\}
\end{align*}
Note that, once again, in the event that the argmin (or argmax) contains more than one element above, the order in which these elements is nominated should be taken to be uniformly random.

We leave for future work a more thorough investigation of how best to choose
the parameter $\lambda$. We found that choosing
$\lambda$ approximately equal to the number of nonseed vertices yielded
reliably good results, but in general the best choice of $\lambda$ is likely
to be dependent on both the structure of the graph and the available features
(e.g., how well the features actually predict block membership).
We note also that in the case where the feature densities are
not easily estimated
or where we would like to relax our distributional assumptions,
we might consider other terms to use in lieu of $\tr F P^\top$.
For example, let $\muhat_k=\frac{1}{m_k}\sum_{v\in S_k}X_v$ be the empirical
estimate of $\mu_k$, the average feature vector for the seeds in block $k$,
and create let $Y$ be defined via
\[
  Y= \kbordermatrix{
    & d   \\
    \fu_1 & \muhat_1\otimes \vec 1  \\
    \fu_2 & \muhat_2\otimes \vec 1\\
     \vdots&\vdots\\
     \fu_K & \muhat_k\otimes \vec 1
  }.
\]
Incorporating these features into the seeded graph matching problem
similarly to~\eqref{eq:maxlikgmfeat}, we have
\begin{align}
\Phat &=\arg \min_{P\in\Pi_{\fu}}-\frac{1}{2}\tr\left(A^{(2,2)}P(B^{(2,2)})^\top P^\top\right)-\tr\left((A^{(1,2)})^\top B^{(1,2)}P^\top \right)-\lambda\tr(X^{(\fu)} Y^\top P^\top).
\end{align}
We leave further exploration of this and related approaches,
as well as how to deal with categorical data
\citep[e.g., as in][]{NewCla2016}
for future work.


\section{Experiments}
\label{sec:experiments}

To compare the performance of maximum likelihood vertex nomination against
other methods, we performed experiments on five data sets,
one synthetic, the others from linguistics, sociology, political science and ecology.

In all our data sets, we consider vertex nomination both when the
edge probability matrix $\Lambda$ is known and when it must be estimated.
When model parameters are unknown,
$m < n$ seed vertices are selected at random and the edge probability matrix is
estimated based on the subgraph induced by the seeds,
with entries of the edge probability matrix estimated via add-one smoothing.
In the case of synthetic data, the known-parameter case simply
corresponds to the algorithm having access to the parameters used to generate
the data. In this paper, we consider a 3-block stochastic block model (see below),
so the known-parameter case corresponds to the true edge probability matrix being
given. In the case of our real-world data sets, the notion of a ``true''
$\Lambda$ is more hazy. Here, knowing the model parameters
corresponds to using the entire graph, along with the true block memberships,
to estimate $\Lambda$, again using add-one smoothing.
This is, in some sense, the best access we can hope to have to the
model parameters, to the extent that such parameters even exist
in the first place.

\subsection{Simulations}
\label{subsec:sim}

We consider graphs generated from stochastic block models at two different scales.
Following the experiments in~\cite{fishkind2015vertex},
we consider 3-block models, where block sizes are given by
$\nvec = q \cdot (4,3,3)^\top$ for $q = 1,50$, which we term the
small and medium cases, respectively.
In~\cite{fishkind2015vertex},
a third case, with $q=1000$, was also considered, but since ML vertex nomination is
not practical at this scale, we do not include such experiments here,
though we note that $\LmlRES$ can be run successfully on such a graph.
We use an edge probability matrix given by
\begin{equation}
\Lambda(t) =
t \begin{bmatrix} 0.5 & 0.3 & 0.4 \\
                  0.3 & 0.8 & 0.6 \\
                  0.4 & 0.6 & 0.3 \end{bmatrix}
+ (1-t)\begin{bmatrix} 0.5 & 0.5 & 0.5 \\
                  0.5 & 0.5 & 0.5 \\
                  0.5 & 0.5 & 0.5 \end{bmatrix}
\end{equation}
for $t=1,0.3$ respectively in the small and medium cases,
so that the amount of signal present in the graph is smaller
as the number of vertices increases. We consider $m=4,20$ seeds in the
small and medium scales, respectively. For a given choice of
$\nvec,m,t$, we generate a single draw of an SBM with edge probability matrix
$\Lambda(t)$ and block sizes given by $\nvec$. A set of $m$ vertices is
chosen uniformly at random from the first block to be seeds.
Note that this means that the only model parameter that can be
estimated is the intra-block probability for the first block.
For all model parameter estimation in the ML methods (i.e., for the
unknown case of $\Lml$ and $\LmlRES$), we use add-1 smoothing
to prevent inaccurate estimates.
We note that in all conditions, the block of interest (the first block)
is not the densest block of the graph.

Recall that all of the methods under consideration return a list of the
nonseed vertices, which we call a \emph{nomination list},
with the vertices sorted
according to how likely they are to be in the block of interest.
Thus, vertices appearing early in the nomination list are the best candidates
to be vertices of interest.
Figure~\ref{fig:SBM:ranks} compares the performance of canonical,
spectral, maximum likelihood and restricted-focus ML vertex nomination
by looking at (estimates of) their average nomination lists.
The plot shows, for each of the methods under
consideration, an estimate (each based on 200 Monte Carlo replicates)
of the average nomination list.
Each curve describes the empirical
probability that the $k$th-ranked vertex was indeed a vertex of interest.
A perfect method, which on every input correctly places the $n_1$ vertices of
interest in the first $n_1$ entries of the nomination list,
would produce a curve in Figure~\ref{fig:SBM:ranks} resembling a step
function, with a step from 1 to 0 at the $(n_1 + 1)$th rank.
Conversely, a method operating purely at random would yield
an average nomination list that is constant $n_1/n$.
Canonical vertex nomination is shown in gold, ML in blue, restricted-focus ML
in red, and spectral vertex nomination is shown in purple and green.
These two colors correspond, respectively, to spectral VN in which
vertex embeddings are projected to the unit sphere prior to
nomination and in which the embeddings are used as-is.
In sparse networks, the adjacency spectral embedding places all vertices
near to the origin. In such settings, projection to the sphere often
makes cluster structure in the embeddings more easily recoverable.
Dark colors correspond to the known-parameter case, and light colors
correspond to unknown parameters. Note that spectral VN does not make
such a distinction.

Examining the plots,
we see that in the small case, maximum likelihood nomination is
quite competitive with the canonical method, and restricted-focus ML
is not much worse. Somewhat surprising is that these methods perform well
seemingly irrespective of whether or not the model parameters are known,
though this phenomenon is accounted for by the fact that the smoothed
estimates are automatically close to the truth, since $\Lambda$ is
approximately equal to the matrix with all entries $1/2$.
Meanwhile, the small number of nodes is such that
there is little signal available to spectral vertex nomination.
We see that spectral vertex nomination performs approximately at-chance
regardless of whether or not we project the spectral embeddings to the sphere.
10 nodes are not enough to reveal eigenvalue structure
that spectral methods attempt to recover.
In the medium case, where there are 500 vertices, enough signal is present
that reasonable performance is obtained by spectral vertex nomination,
with performance with (purple) and without (green) projection to the sphere
again indistinguishable. The comparative density of the SBM in question
ensures that projection to the sphere is not necessary, and that
doing so does no appreciable harm to nomination.
However, in the medium case, ML-based vertex nomination still appears to
best spectral methods, with the known and unknown cases being nearly
indistinguishable.
We note that in both the small and medium cases all of the methods appear to intersect at an empirical probability of $0.4$.
These intersection points correspond to the transition from the block of interest to the non-interesting vertices: these vertices, about which we are least confident, tend to be nominated correctly at or near chance, which is 40\% in both the small and large cases.

\begin{figure}[t!]
  \centering
  \subfloat[Small scale simulation results]{ \includegraphics[width=0.50\columnwidth]{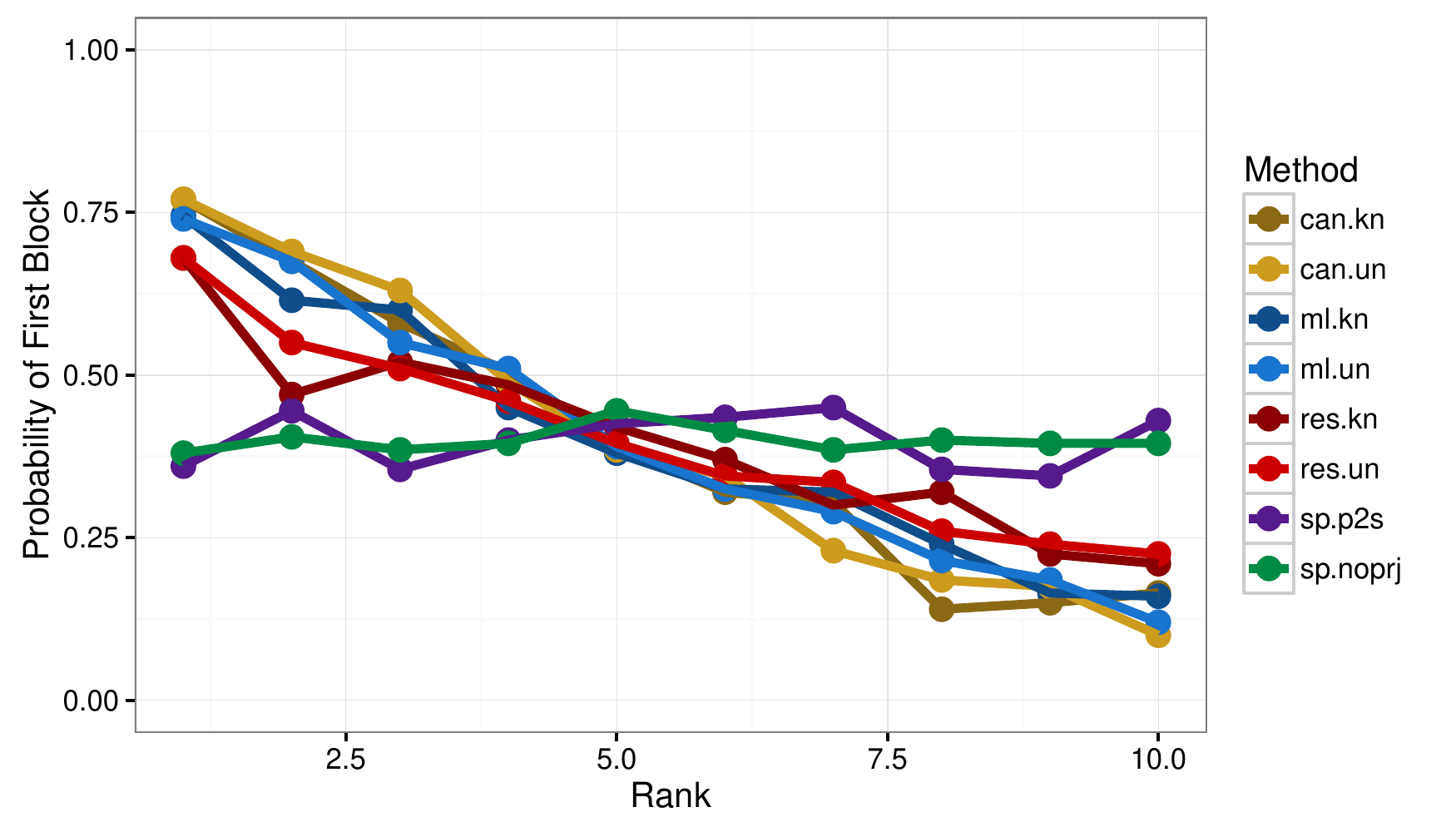} }
  \subfloat[Medium scale simulation results]{ \includegraphics[width=0.50\columnwidth]{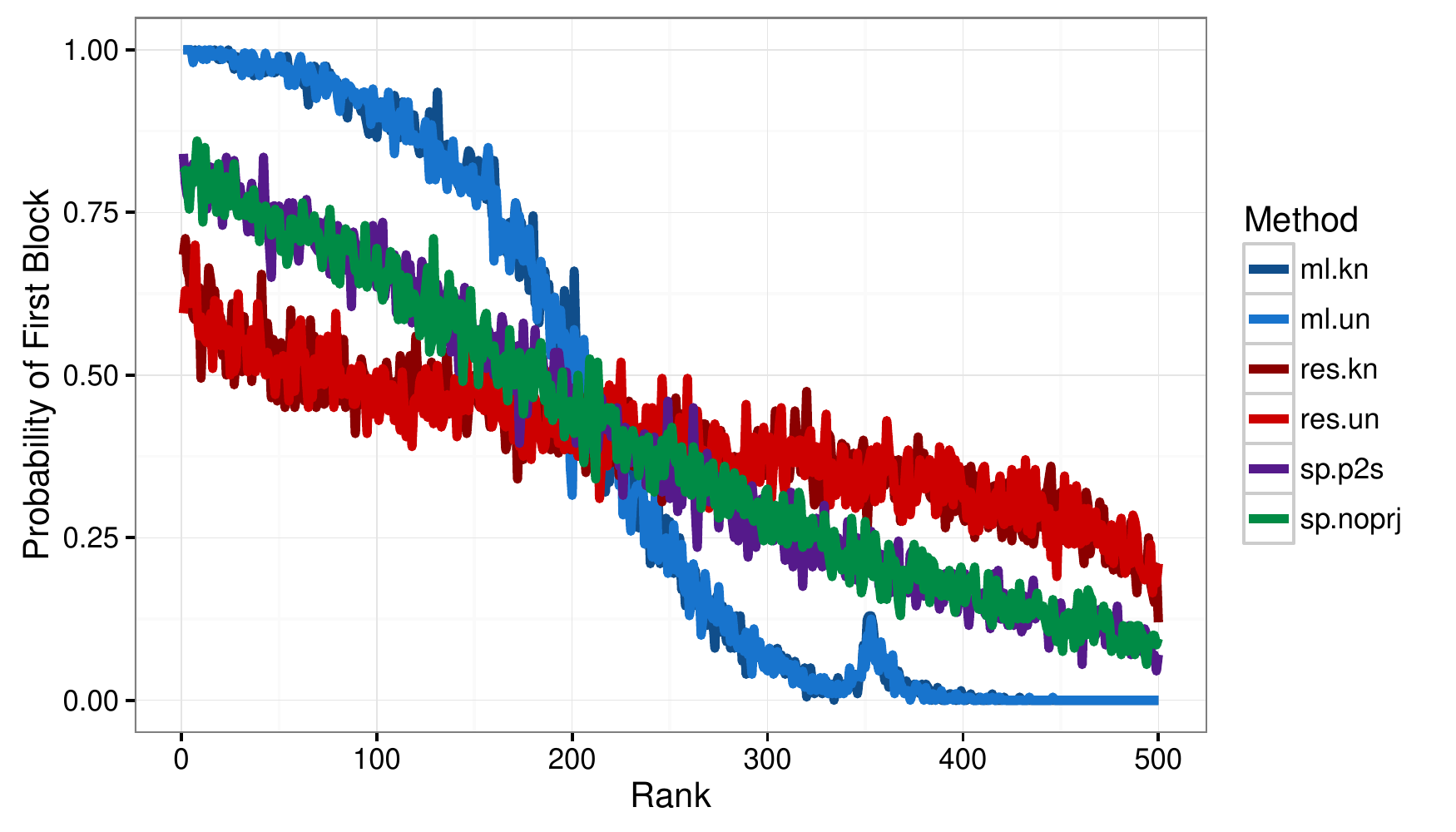} }
  \vspace{-0.2cm}
  \caption{ The mean nomination lists for the
        (a) small and (b) medium stochastic block model experiments
        for the different vertex nomination techniques
        in both the known (dark colors) and unknown (light colors).
        Plot (a) shows performance for the canonical (gold),
        maximum likelihood (blue), restricted-focus maximum likelihood (red)
        and spectral (green and purple) methods.
        Spectral VN both with and without projection to the sphere
        is shown in purple and green, respectively.
        Plot (b) does not include canonical vertex nomination
        due to runtime constraints. }
  \label{fig:SBM:ranks}
  \vspace{-0.25cm}
\end{figure}

A more quantitative assessment of the vertex nomination methods is
contained in Tables~\ref{tab:SBM:aveprec} and~\ref{tab:SBM:ari},
which compare the performance of the methods as assessed by,
respectively, average precision (AP) and adjusted Rand index (ARI).
As defined in Equation~\eqref{eq:def:AP},
AP is a value between 0 and 1, where a value of 1 indicates perfect performance.
ARI~\citep{HubAra1985} measures how well a given partition of a set recovers
some ground truth partition. Here a value of 1 indicates perfect
recovery, while randomly partitioning a data set yields ARI approximately 0
(note that negative ARI is possible).
We include ARI as an evaluation to highlight the fact that
spectral and maximum likelihood vertex nomination do not merely classify
vertices as interesting or not.
Rather, they return a partition of the vertices into clusters.
Canonical vertex nomination, on the other hand, makes no attempt to recover
the full cluster structure of the graph, instead only attempting to classify
vertices according to whether or not they are of interest.
As such, we do not include ARI numbers for canonical vertex nomination.
Turning first to performance in the small graph condition
in Table~\ref{tab:SBM:aveprec},
we see that $\Lcan$ is the best method, so long as the graph in question
is small enough that the canonical method is tractable,
but $\Lml$, regardless of whether or not model parameters are known,
nearly matches canonical VN, and, unlike its canonical counterpart,
scales to graphs with more than a few nodes.
The numbers for $\Lspec$ bear out our observation above,
that the small graphs contain too little information for spectral VN
to act upon, and $\Lspec$ performs approximately at chance, as a result.
It is worth noting that while $\LmlRES$ does not match the
performance of $\Lml$, presumably owing to the fact that the restricted-focus
algorithm does not use all of the information present in the graph,
it still outperforms spectral nomination, and lags $\Lml$ by less than 0.1 AP.

Turning our attention to the medium case,
we see again that $\Lml$ and $\LmlRES$ remain largely impervious to
whether model parameters are known or not, presumably a consequence of
the use of smoothing---we'll see in the sequel that estimation
can be the difference between near-perfect performance and near-chance.
With more vertices, we see that spectral improves above chance,
leaving restricted ML slightly worse,
but spectral still fails to match the performance of ML VN,
even when model parameters are unknown.

In sum, these results suggest that different size graphs
(and different modeling assumptions) call for different vertex nomination
methods. In small graphs, regardless of whether or not model parameters are
known, canonical vertex nomination is both tractable and quite effective.
In medium graphs, maximum likelihood vertex nomination remains tractable
and achieves impressively good nomination.
Of course, for graphs with thousands of vertices, $\Lml$ becomes
computationally expensive, leaving only $\Lspec$ and $\LmlRES$ as options.
We have observed that $\LmlRES$ tends to
lag $\Lspec$ in such large graphs, though increasing the number of seeds
(and hence the amount of information available to $\LmlRES$)
closes this gap considerably.
We leave for future work a more thorough exploration of under what
circumstances we might expect $\LmlRES$ to be competitive with
$\Lspec$ in graphs on thousands of vertices.

\begin{table}[t!]
  \centering
  \begin{tabular}{ r | c | c | c | c | c | c | c | c | }
      \cline{2-9}
      & \multicolumn{4}{ |c| }{ Known }   & \multicolumn{4}{ |c| }{ Unknown } \\
      \cline{2-9}
                & ML    & RES   & SP    & CAN   & ML    & RES & SP    & CAN \\
      \hline
        small & 0.670 & 0.588 & 0.388 & 0.700 & 0.680 & 0.606 & 0.415 & 0.710 \\
        medium & 0.954 & 0.545 & 0.738 & --   & 0.954 & 0.537 & 0.735 & -- \\
        \hline
  \end{tabular}
  \caption{ Empirical estimates of
        mean average precision on the two stochastic block model data
        sets for the four methods under consideration.
        Each data point is the mean of 200 independent trials. }
  \label{tab:SBM:aveprec}
\end{table}

\begin{table}[t!]
  \centering
  \begin{tabular}{ r | c | c | c | c | c | c | c | c | }
      \cline{2-9}
      & \multicolumn{4}{ |c| }{ Known } & \multicolumn{4}{ |c| }{ Unknown } \\
      \cline{2-9}
                & ML    & RES   & SP    & CAN & ML    & RES   & SP    & CAN \\
      \hline
         small & 0.338 & 0.259 & 0.011 & --  & 0.338 & 0.259 & 0.011 & --  \\
         medium & 0.572 & 0.039 & 0.268 & --  & 0.572 & 0.037 & 0.271 & -- \\
        \hline
  \end{tabular}

  \caption{ ARI on the different sized data
        sets for the ML, restricted ML, and spectral methods.
        Each data point is the mean of 200 independent trials.
        Performance of canonical vertex nomination is knot included,
        since canonical vertex nomination makes no attempt to recover all
        three blocks, and thus ARI is not a sensible measure. }
  \label{tab:SBM:ari}
\end{table}

\subsection{Word Co-occurrences}

We consider a linguistic data set consisting of co-occurrences of
54 nouns and 58 adjectives in Charles Dickens' novel
\emph{David Copperfield}~\citep{Newman2006}.
We construct a graph in which
each node corresponds to a word, and an edge connects two nodes
if the two corresponding words occurred adjacent to one another in the text.
The adjacency matrix of this graph is shown in Figure~\ref{fig:adjnoun:A}.
Visual inspection reveals a clear block structure, and that
this block structure is clearly not assortative (i.e., inter-block
edges are more frequent than intra-block edges).
This runs contrary to many commonly-studied data sets and model assumptions.
Figure~\ref{fig:adjnoun:apari} shows the performance of spectral and
maximum-likelihood vertex nomination, measured by (a) average precision
and adjusted Rand index (ARI) at various numbers of seeds.
Each data point is the average over 1000 trials.
In each trial, a set of $m$ seeds was chosen uniformly at random from
the 112 nodes, with the restriction that at least one noun and one adjective
be included in the seed set.
Performance was then measured as the mean average precision
in identifying the adjective block.

\begin{figure}[t!]
  \centering
  \vspace{-5mm}
    \includegraphics[width=0.5\columnwidth]{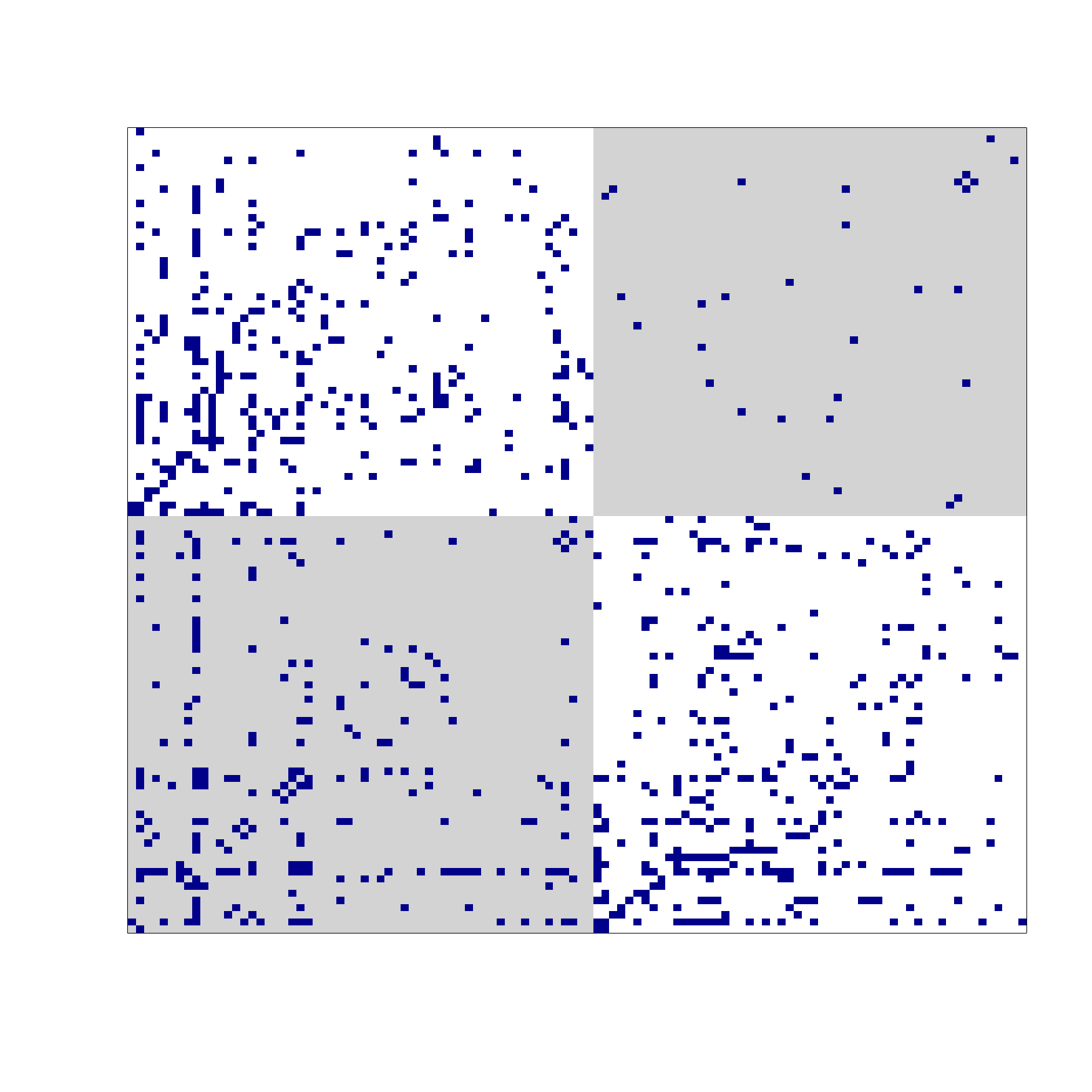}
  \vspace{-5mm}
  \caption{ Adjacency matrix of the linguistic data set,
        arranged to highlight the graph's structure.
        The grey shading indicates the two blocks, with adjectives in
        the lower left and nouns in the upper right.
        Note the disassortative block structure. }
  \label{fig:adjnoun:A}
\end{figure}

Figure~\ref{fig:adjnoun:apari} shows the performance of the VN schemes
under consideration, as a function of the number of seed vertices,
using both known (dark colors) and estimated (light colors) model parameters.
Looking first at AP in Figure~\ref{fig:adjnoun:apari} (a),
we see that ML in the known-parameter case (dark blue) does consistently well,
even with only a handful of seeds, and attains near-perfect performance for
$m \ge 20$. When model parameters must be estimated (light blue), ML is less
dominant, thought it still performs nearly perfectly for $m \ge 20$.
We note the dip in unknown-parameters ML as $m$ increases from $2$ to $5$ to $10$,
a phenomenon we attribute to the bias-variance tradeoff.
Namely, with more seeds available, variance in the estimated model
parameters increases, but for $m < 20$, this increase in variance
is not offset by an appreciable improvement in estimation,
possibly attributable to our use of add-one smoothing.
Somewhat surprisingly, restricted-focus ML performs quite well,
consistently improving on spectral VN in the known parameter
case for $m>2$, and in the unknown parameter case once $m > 10$.
Finally, we turn our attention to spectral VN,
shown in green for the variant in which we project embeddings
to the sphere and in purple for the variant in which we do not.
In contrast to our simulations, the sparsity of this network
makes projection to the sphere a critical requirement for
successful retrieval of the first block.
Without projection to the sphere, spectral VN fails
to rise appreciably above chance performance.

\begin{figure}[t!]
  \centering
  \subfloat[]{ \includegraphics[width=0.50\columnwidth]{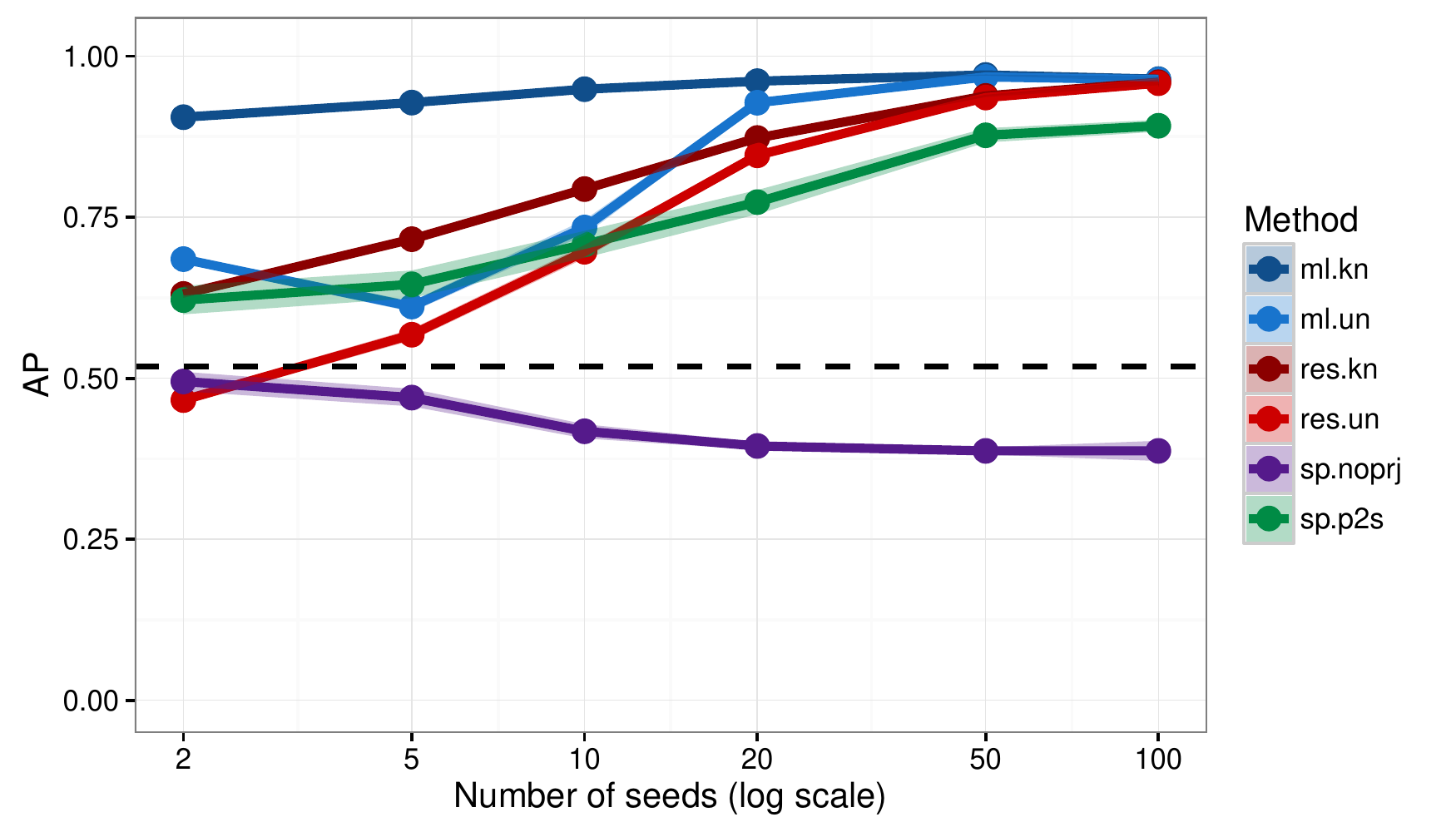}}
  \subfloat[]{ \includegraphics[width=0.50\columnwidth]{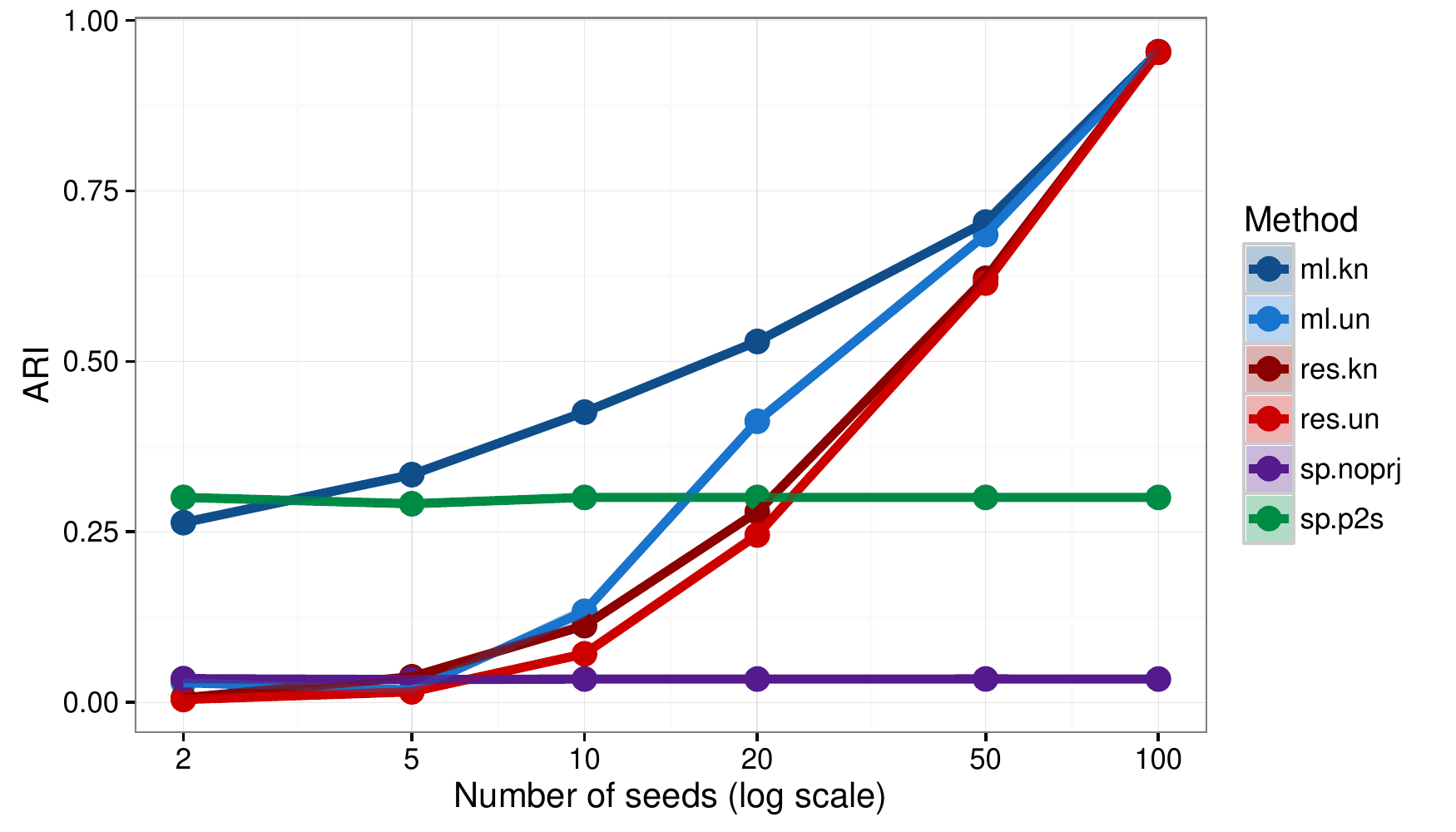} }
  \vspace{-0.2cm}
  \caption{ Performance on the linguistic data set
        as measured by (a) AP and (b) ARI
        as a function of the number of seeds for the
        ML vertex nomination (blue),
        restricted-focus ML (red),
        and spectral vertex nomination with (green)
        and without projection to the sphere (violet),
        when model parameters are
        known (light colors) and unknown (dark colors).
        Each data point is the mean of 1000 Monte Carlo trials,
        and shaded regions indicate two standard deviations of the mean. }
  \label{fig:adjnoun:apari}
\end{figure}

\subsection{Zachary's Karate Club}
We consider the classic sociological data set, Zachary's karate club
network~\citep{Zachary1977}.
The graph, visualized in Figure~\ref{fig:karate:graph},
consists of 34 nodes, each corresponding to a member of a college karate club,
with edges joining pairs of club members according to whether or not
those members were observed to interact consistently outside of the club.
Over the course of Zachary's observation of the group, a conflict emerged
that led to the formation of two factions, led by the individuals numbered 1
and 34 in Figure~\ref{fig:karate:graph},
and these two factions constitute the two blocks in this experiment.
Zachary's karate data set is particularly well-suited for spectral methods.
Indeed, the flow-based model originally proposed by Zachary recovers
factions nearly perfectly,
and visual inspection of the graph (Figure~\ref{fig:karate:graph})
suggests a natural cut separating the two factions.
As such, we expect ML-based vertex nomination to lose out
against the spectral-based method.
Figure~\ref{fig:karate:apari} shows performance
of the two algorithms as measured by ARI and average precision.
We see, as expected,
that spectral performance performs nearly perfectly, irrespective of
the number of seeds.
Surprisingly, maximum likelihood nomination is largely competitive with
spectral VN, but only provided that the model parameters are already
known.
Interesting to note that here again we see the phenomenon
discussed previously in which ML performance with an unknown
edge probability matrix degrades when going from $s=2$ seeds to
$s=5$ before improving again,
with AP comparable to the known case for $s \ge 20$.

\begin{figure}[t!]
  \centering
    \includegraphics[width=0.4\columnwidth]{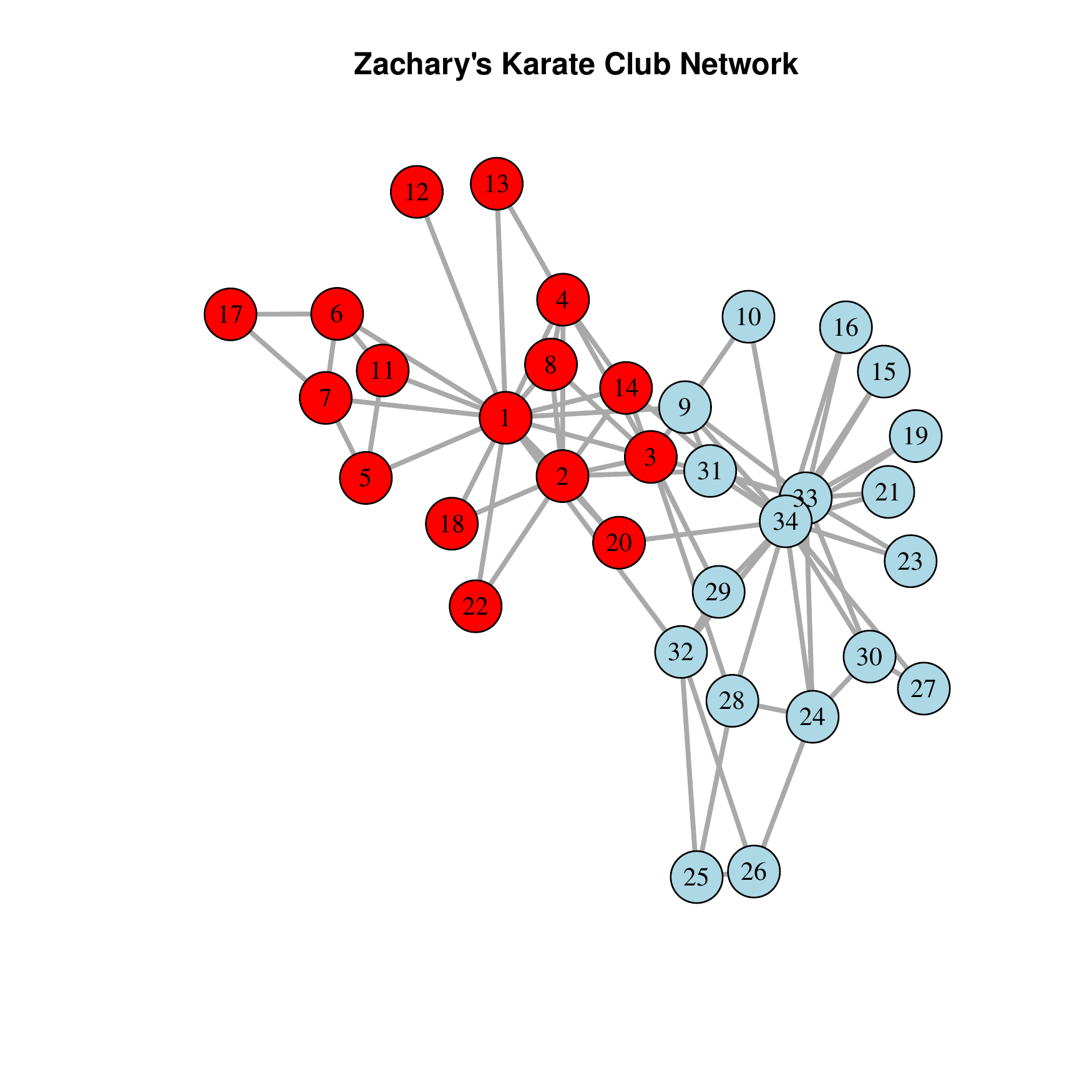}
    \vspace{-10mm}
  \caption{ Visualization of the graph corresponding to Zachary's karate
        club data set. The vertices are colored according to which of the
        two clubs each member chose to join after the schism.
        Our block of interest is in red. }
  \label{fig:karate:graph}
\end{figure}

\begin{figure}[t!]
  \centering
  \subfloat[]{ \includegraphics[width=0.5\columnwidth]{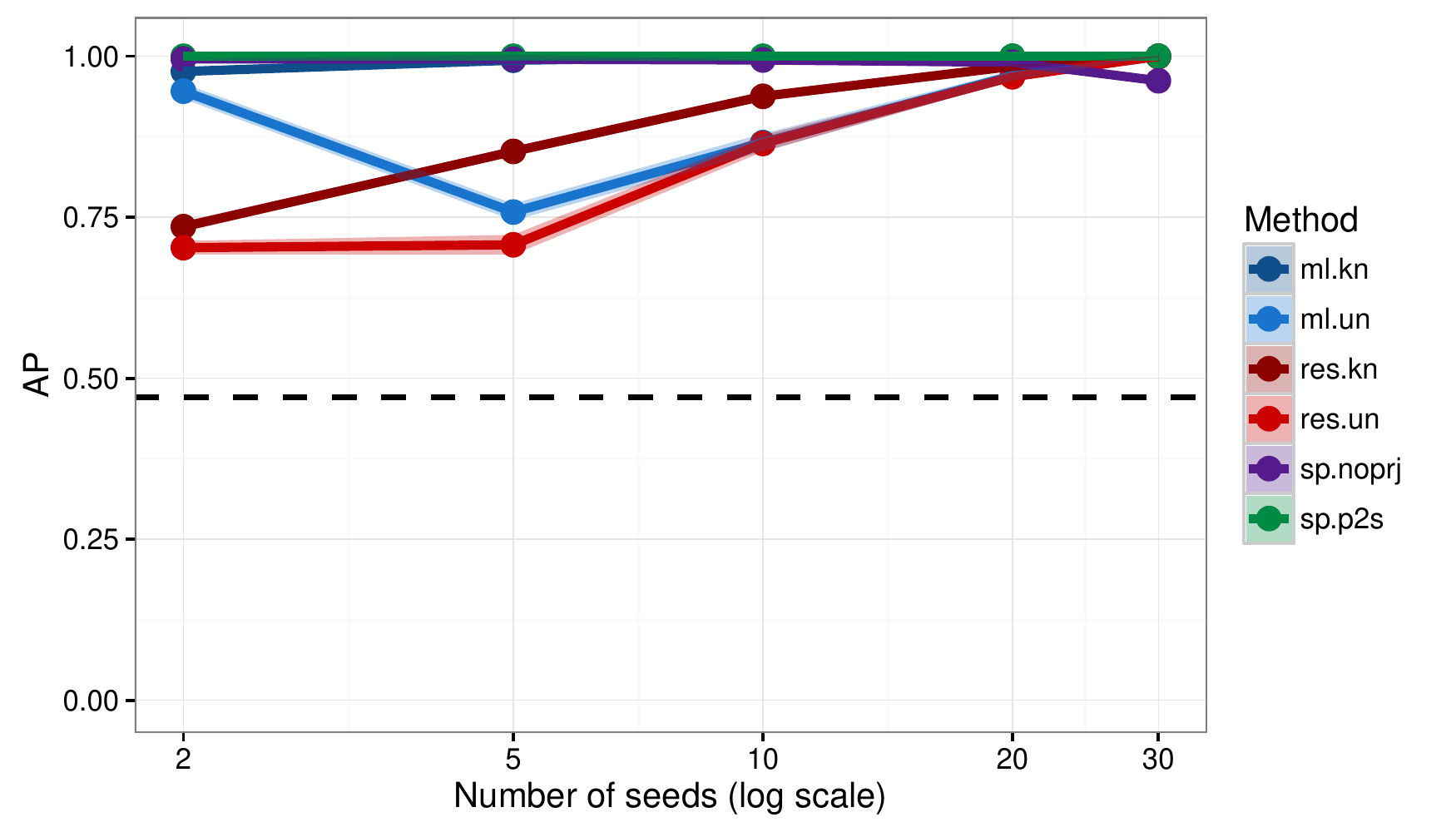} }
  \subfloat[]{ \includegraphics[width=0.5\columnwidth]{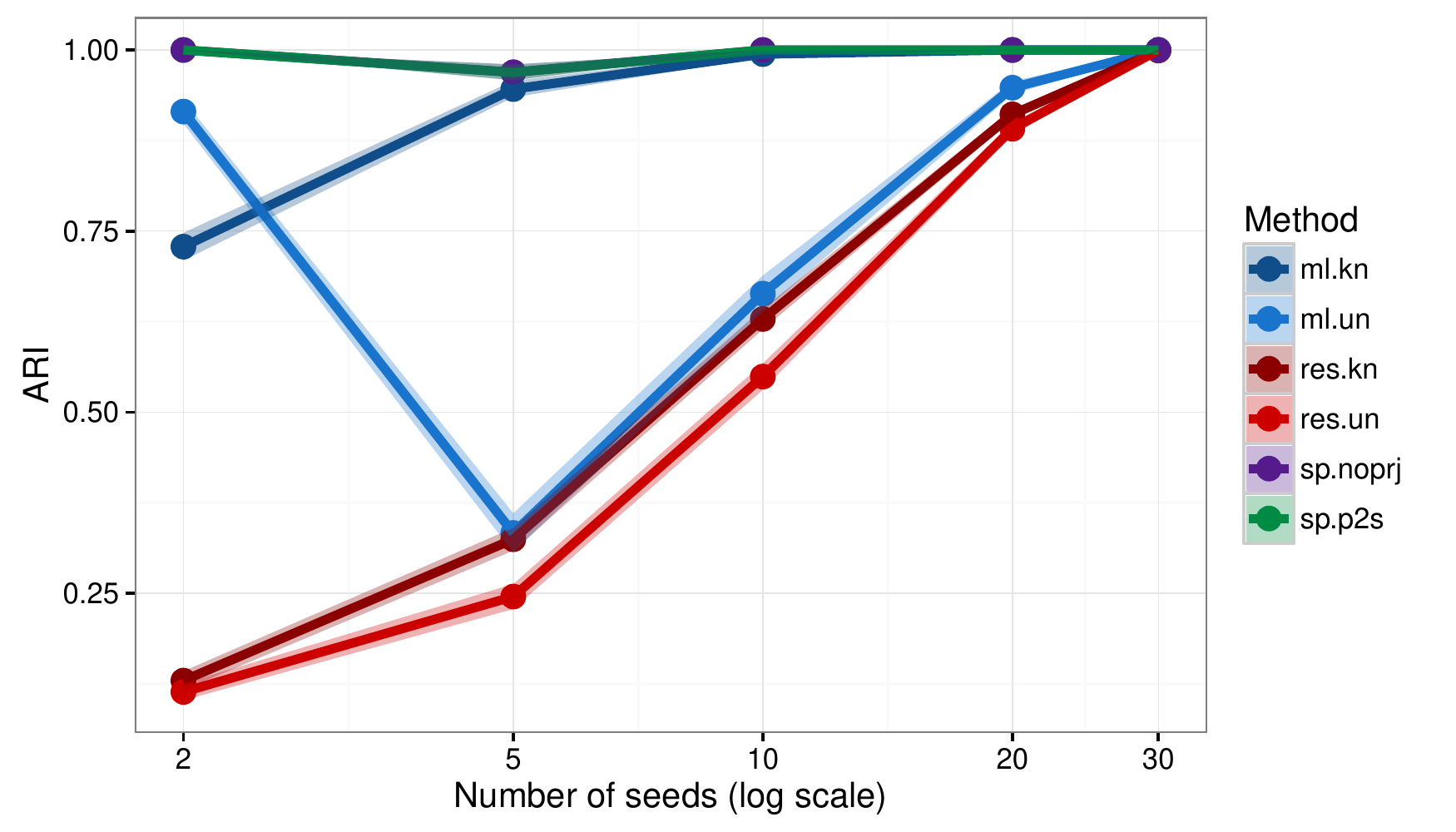} }
  \vspace{-0.2cm}
  \caption{
        Performance on the karate
        data set as a function of the number of seeds for the
        ML vertex nomination (blue),
        restricted-focus ML nomination (red),
        and spectral vertex nomination with (green)
        and without projection to the sphere (violet),
        when model parameters are known (light colors)
        and unknown (dark colors), as measured by (a) AP and (b) ARI.
        The black dashed line indicates chance performance.
        Each observation is the mean of 1000 independent trials,
        with the shaded bars indicating two standard errors of the mean
        in either direction. }
  \label{fig:karate:apari}
\end{figure}

\subsection{Political Blogs}
We consider a network of American political blogs in the lead-up
to the 2004 election~\citep{AdaGla2005},
where an edge joins two blogs if one links to the other,
with blogs classified according to political leaning (liberal vs conservative).
From an initial 1490 vertices, we removed all isolated vertices
to obtain a network of 1224 vertices and 16718 edges.
Figure~\ref{fig:polblog:apari} shows the
performance of the spectral- and ML-based methods in recovering
the liberal block.
We observe first and foremost that the sparsity of this network results in
exceptionally poor performance in both AP and ARI for spectral VN
unless the embeddings are projected to the sphere,
but that spectral vertex nomination is otherwise quite effective
at recovering the liberal block, with
performance nearly perfect for $m > 10$.
Unsurprisingly, ML and its restricted counterpart both perform
approximately at-chance when $m < 10$.
We see that in both the known and unknown cases,
ML VN is competitive with spectral VN for suitably large $m$
($m \ge 50$ for known, $m \ge 500$ for unknown).
As expected in such a sparse network, restricted-focus ML lags
ML VN in the known-parameter case, but surprisingly,
in the unknown-parameter case, restricted ML achieves remarkably
better AP than does ML, a fact we are unable to account for,
though it is worth noting that looking at ARI in
Figure~\ref{fig:polblog:apari} (b), no such gap appears
between ML and its restricted-focus counterpart
in the unknown-parameter case.

\begin{figure}[t!]
  \centering
  \subfloat[]{ \includegraphics[width=0.50\columnwidth]{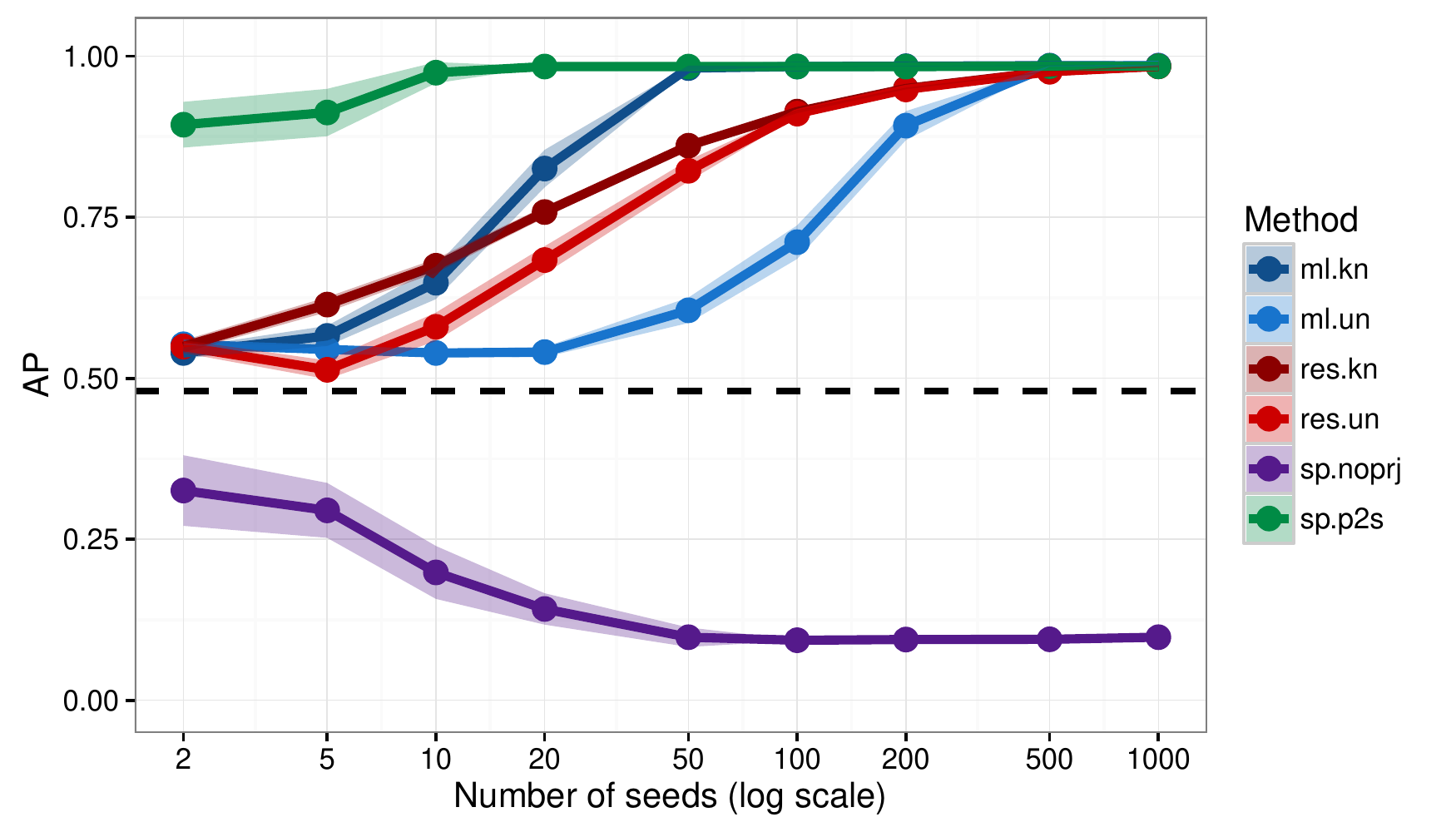} }
  \subfloat[]{ \includegraphics[width=0.50\columnwidth]{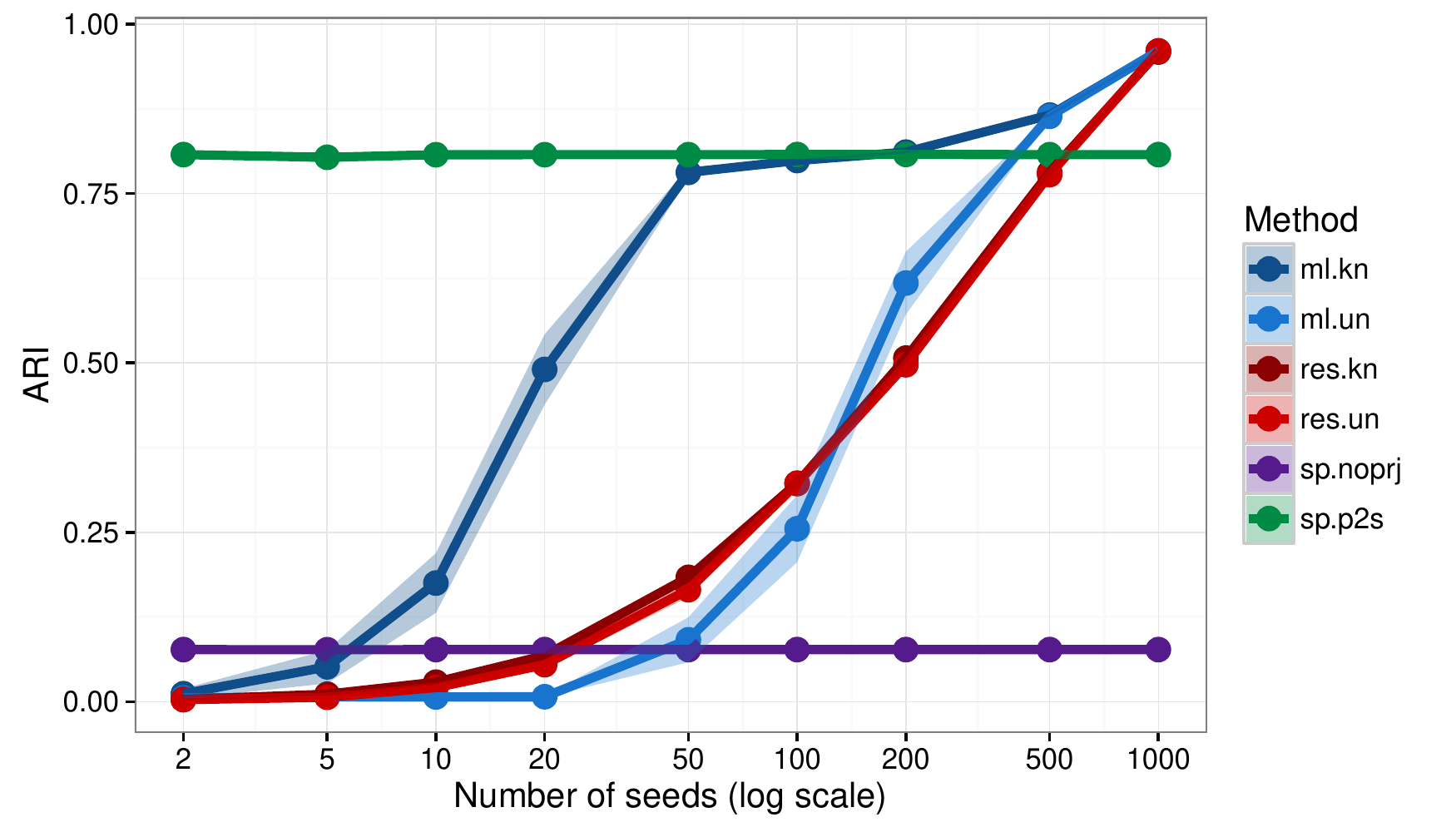} }
  \vspace{-0.2cm}
  \caption{ Performance on the political blogs data set
        as a function of the number of seeds for the
        ML vertex nomination (blue),
        restricted-focus ML (red),
        and spectral vertex nomination with (green)
        and without projection to the sphere (violet),
        when model parameters are
        known (light colors) and unknown (dark colors),
        as measured by (a) AP and (b) ARI. }
  \label{fig:polblog:apari}
\end{figure}

\subsection{Ecological Network}
We consider a trophic network, consisting of 125 nodes and 1907 edges,
in which nodes correspond to (groups of)
organisms in the Florida Bay ecosystem~\citep{UlaBonEgn1997,DeNMrvBat2011},
and an edge joins a pair of organisms if one feeds on the other.
Our features are the (log) mass of organisms.
We take our community of interest to be the 16 different
types of birds in the ecosystem.
This choice makes for an interesting task for several reasons.
Firstly, unlike the other data sets we consider,
our community of interest is a comparatively small fraction of the network---it consists of a mere 16 nodes of 125 in total.
Further, our block of interest is comparatively heterogeneous
in the sense that the roles of the different types of birds
in the Florida Bay ecosystem is quite diverse.
For example, the block of interest includes both raptors and shorebirds,
which feed on quite different collections of organisms.
Finally, it stands to reason that the mass of the organisms in question
might be a crucial piece of information for disambiguating, say, a raptor
from a shark. Thus, we expect that using node features will be crucial
for retrieving the block of interest.

The topology of the Florida Bay network is shown in
Figure~\ref{fig:florida} (a). Note that the block of interest,
indicated in red, has a strongly disassortative structure.
Indeed, all intra-block edges in the red block are incident to the node
corresponding to raptors.
Figure~\ref{fig:florida} (b) summarizes vertex nomination performance
for several methods.
The plot shows performance, as measured by mean average precision (AP),
as a function of the number of seeds for several different
nomination schemes. As in earlier plots, dark colors correspond to model
parameters being known, while light colors correspond to model parameters
being estimated using the seed vertices.
We see immediately that spectral nomination (green and purple) and ML VN (blue)
fail to improve appreciably upon chance performance except when the vast
majority of the vertices' labels are observed.
Like in the linguistic data set presented above,
the disassortative structure of the data appears to cause problems for
spectral nomination.
The failure of ML suggests that no useful information is encoded
in the graph itself, but turning our attention to the curves corresponding
to $\LFEML$ (red) and using only features (gold),
we see that this is not the case.
Indeed, we see that while using features alone achieves a marked improvement
over both spectral and ML-based nomination, using both features and graph
matching in the form of $\LFEML$ yields an additional improvement of
some 0.1 AP in the range of $m=8,16,32$.
This result suggests that there may be cases
where the only reliable way to retrieve vertices of interest is to leverage
both features and graph topology jointly.
\begin{figure}[t!]
  \centering
  \subfloat[]{ \includegraphics[width=0.4\columnwidth]{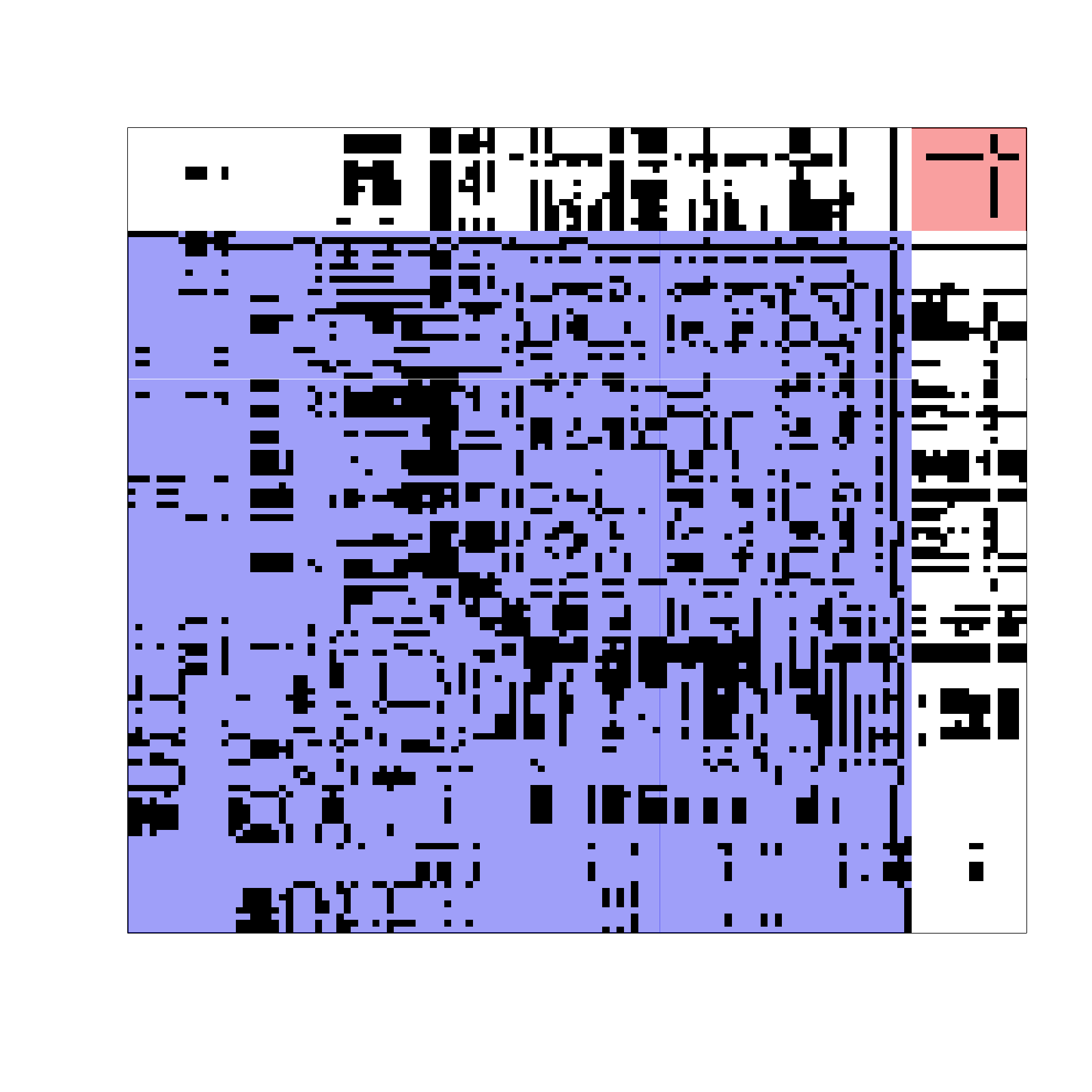} }
  \subfloat[]{ \vspace{-1cm} \includegraphics[width=0.6\columnwidth]{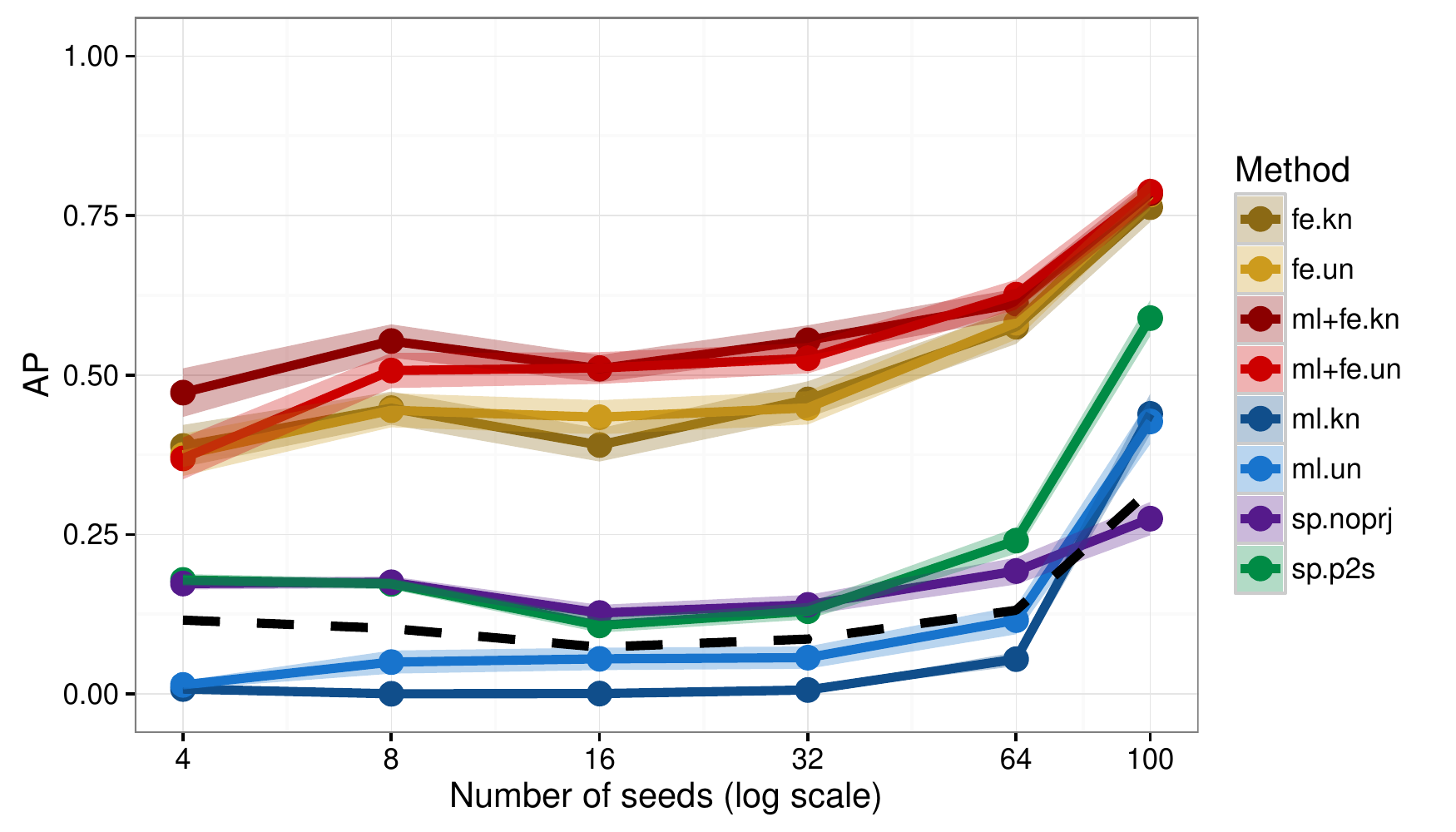} }
  \vspace{-0.2cm}
  \caption{ (a) The adjacency matrix of the Florida Bay trophic network.
        Nodes correspond to classes of plants and animals
        (e.g., sharks, rays, shorebirds, zooplankton, phytoplankton).
        An edge joins two nodes if the corresponding organisms are in
        a predator-prey relation. The sixteen types of birds in the
        network are highlighted in the red block.
        Note the disassortative structure of the bird block
        (the edges within the red block are all incident to the node that
        corresponds to raptors).
\
        (b) Average precision in identifying the bird nodes as a function
        of the number of seed vertices for
        ML vertex nomination (blue),
        restricted-focus ML (red),
        and spectral vertex nomination with (green)
        and without projection to the sphere (violet),
        when model parameters are
        known (light colors) and unknown (dark colors).
        The black dashed line indicates chance performance.}
  \label{fig:florida}
\end{figure}

\section{Discussion and Future Work}
\label{sec:discussion}

Network data has become ubiquitous in the sciences,
giving rise to a vast array of computational and statistical problems
that are only beginning to be explored.
In this paper, we have explored one such problem that arises when working
with network data, namely the task of performing vertex nomination.
This task, in some sense the graph analogue of the
classic information retrieval problem, is fundamental to
exploratory data analysis on graphs as well as to machine learning
applications.
Above, we established the consistency of two methods of vertex nomination:
a maximum-likelihood scheme $\Lml$ and
its restricted-focus variant $\LmlRES$, in which
we obtain a feasibly exactly-solvable optimization problem at the expense
of using less than the full information available in the graph.
Additionally, we have introduced a maximum-likelihood vertex nomination scheme
for the case where vertices are endowed with features
and when (possibly weighted) edges are drawn from a canonical exponential
family.
The key to all of these methods is the ability to quickly approximate a
solution to the seeded graph matching problem.

We have presented experimental comparisons of these methods against each other
and against several other benchmark methods, where we see that
the best choice of method depends highly on graph
size and structure. The major tradeoff appears to be that large graphs
(tens of thousands of vertices) are not tractable for $\Lml$,
but in smaller and medium-sized graphs, $\Lml$
can detect signal where spectral methods fail to do so.
It is worth noting that $\Lml$, and, to a lesser extent, $\LmlRES$,
is quite competitive with $\Lspec$, and even manages to best $\Lspec$
when the structure of the graph is ill-suited to the typical assumptions
of spectral methods, as in the case of our linguistic data set.
All told, our experimental results mirror those
in~\cite{fishkind2015vertex} and point toward a theory of which methods are
best-suited to which graphs, a direction that warrants further exploration.





\appendix
\section{Proof Details}
Before proving Theorem~\ref{thm:mlconsis}, we first state a useful
initial proposition.
\begin{proposition}
\label{prop:cycle}
Let $\vec x=(x_1,x_2,\ldots,x_k)$ be a vector with distinct entries in $\mathbb{R}^{k}$.  Let $f(\cdot)$ be a strictly increasing real valued function (with the abuse of notation, $f(\vec x$), denoting $f(\cdot)$ applied entrywise to $\vec x$).
Let the order statistics of $\vec x$ be denoted
$$x_{(1)}< x_{(2)}<\cdots< x_{(k)},$$
and define
$\alpha=\min_{i\in\{2,3,\ldots,k\}} |x_{(i)}-x_{(i-1)}|$, and $\beta=\min_{i\in\{2,3,\ldots,k\}} |f(x_{(i)})-f(x_{(i-1)})|$.
If $\sigma$ is the cyclic permutation
$$\sigma=\begin{pmatrix}
  1 & 2 & 3 &\cdots & k \\
  2 & 3 & 4& \cdots & 1 
 \end{pmatrix},$$
then
$$\langle \vec x,f(\vec x)\rangle -\langle \vec x,f(\sigma(\vec x))\rangle\geq (k-1)\alpha\beta.$$
\end{proposition}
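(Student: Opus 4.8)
The plan is to rewrite the scalar of interest as a cyclic sum and then decompose it across the $k-1$ gaps of the sorted data. Writing $y_i := f(x_i)$ and reading off $\sigma(\vec x) = (x_2, x_3, \ldots, x_k, x_1)$, the quantity to be bounded is
$$D := \langle \vec x, f(\vec x)\rangle - \langle \vec x, f(\sigma(\vec x))\rangle = \sum_{i=1}^{k} x_i\,(y_i - y_{i+1}),$$
with indices read cyclically ($y_{k+1} := y_1$). First I would record that because $f$ is strictly increasing, the sorted order of the $y_i$ coincides with that of the $x_i$; letting $\rho(i)$ denote the rank of $x_i$, so that $x_i = x_{(\rho(i))}$ and $y_i = f(x_{(\rho(i))})$, I set the gap sizes $\delta_l := x_{(l+1)} - x_{(l)}$ and $\epsilon_l := f(x_{(l+1)}) - f(x_{(l)})$ for $l = 1, \ldots, k-1$, and note $\delta_l \ge \alpha > 0$ and $\epsilon_l \ge \beta > 0$.

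\textbf{A layer-cake rewriting across the gaps.}
The crux is a summation-by-parts / layer-cake manipulation. Using $y_i = f(x_{(1)}) + \sum_{l=1}^{k-1}\epsilon_l\,\mathbbm{1}\{\rho(i) > l\}$, each difference telescopes as $y_i - y_{i+1} = \sum_{l=1}^{k-1}\epsilon_l\big(\mathbbm{1}\{\rho(i) > l\} - \mathbbm{1}\{\rho(i+1) > l\}\big)$, so interchanging the two summations gives
$$D = \sum_{l=1}^{k-1}\epsilon_l\,C_l, \qquad C_l := \sum_{i=1}^{k} x_i\big(\mathbbm{1}\{\rho(i) > l\} - \mathbbm{1}\{\rho(i+1) > l\}\big).$$
The coefficient attached to $x_i$ is $+1$ exactly when the cyclic edge $i \to i+1$ steps from the ``high'' side $H_l := \{i : \rho(i) > l\}$ to the ``low'' side $L_l := \{i : \rho(i)\le l\}$, is $-1$ when it steps from low to high, and is $0$ otherwise.

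\textbf{The combinatorial heart.}
Here is the one genuinely load-bearing step. Since $\sigma$ is a single $k$-cycle and both $L_l$ and $H_l$ are nonempty for every $l \in \{1, \ldots, k-1\}$, traversing the cycle must cross the cut between $L_l$ and $H_l$ an equal number $m_l \ge 1$ of times in each direction (the side-labels form a closed walk, so up- and down-crossings balance, and each occurs at least once). Every high-to-low edge contributes $+x_i$ with $x_i \ge x_{(l+1)}$, and every low-to-high edge contributes $-x_i$ with $x_i \le x_{(l)}$; summing the $m_l$ edges of each type yields
$$C_l \;\ge\; m_l\,x_{(l+1)} - m_l\,x_{(l)} \;=\; m_l\,\delta_l \;\ge\; \delta_l \;\ge\; \alpha.$$
Because each $C_l \ge \alpha > 0$ and each $\epsilon_l \ge \beta > 0$, every summand obeys $\epsilon_l C_l \ge \alpha\beta$, and summing over the $k-1$ gaps gives $D \ge (k-1)\alpha\beta$, as claimed. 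The routine parts are the telescoping identity and the sign bookkeeping for $C_l$; the only subtle point—which I expect to be the crux—is the parity fact forcing equal numbers of up- and down-crossings, since that is precisely what converts the signed sum $C_l$ into the manifestly positive quantity $m_l\delta_l$. I would also note that the argument is insensitive to the orientation convention for $\sigma(\vec x)$: reversing the cycle merely flips the crossing signs and leaves the bound $C_l \ge \alpha$ intact.
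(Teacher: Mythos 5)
Your proof is correct, but it takes a genuinely different route from the paper's. The paper argues by induction on $k$: setting $x_1 = x_{(1)}$ without loss of generality, it splits off the permutation $\tau$ that cycles only the indices $\{2,\dots,k\}$, computes the leftover term exactly as $(x_k - x_1)\bigl(f(x_2)-f(x_1)\bigr) \geq \alpha\beta$, and invokes the inductive hypothesis on the remaining $(k-1)$-cycle. You instead avoid induction entirely via an Abel/layer-cake expansion $y_i = f(x_{(1)}) + \sum_{l=1}^{k-1}\epsilon_l\,\mathbbm{1}\{\rho(i)>l\}$, which rewrites the difference as $D = \sum_{l=1}^{k-1}\epsilon_l C_l$, and then bounds each $C_l$ by a cut-crossing argument: since the single $k$-cycle must cross the cut between $\{\rho \le l\}$ and $\{\rho > l\}$ equally often ($m_l \ge 1$ times) in each direction, $C_l \ge m_l \delta_l \ge \alpha$. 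Both arguments are sound; the bookkeeping in each (the telescoping identity and crossing-parity fact for you, the exact computation of the $\sigma$-versus-$\tau$ discrepancy for the paper) is routine. What your route buys is a sharper and more structural conclusion, $D \ge \sum_{l=1}^{k-1} m_l\,\delta_l\,\epsilon_l$, which localizes the contribution of each gap; this immediately yields the paper's Remark~\ref{rem:notallequal} (a single well-separated value contributes its own $\alpha_i\beta_i$ through the two cuts flanking it) and extends with no change to arbitrary permutations, since the crossing-balance argument applies cycle by cycle — a generalization the paper instead obtains by appealing to the cyclic decomposition and applying the proposition to each cycle. What the paper's induction buys is brevity: it is a few lines with essentially no combinatorial setup.
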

\begin{proof}
We will induct on $k$.  To establish the base case, $k=2$, let $x_1=x_{(1)}$
without loss of generality and observe that
\begin{align*}
\langle \vec x,f(\vec x)\rangle -\langle \vec x,f(\sigma(\vec x))\rangle&=(x_2-x_1)(f(x_2)-f(x_1))\\
&=(x_{(2)}-x_{(1)})(f(x_{(2)})-f(x_{(1)}))\geq \alpha\beta.
\end{align*}
For general $k$, again, without loss of generality let $x_1=x_{(1)}$, and define the permutation
$$\tau=\begin{pmatrix}
   2 & 3 &\cdots & k \\
   3 & 4& \cdots & 2 
 \end{pmatrix}.$$
 Then
 \begin{align*}
\langle \vec x,f(\vec x)\rangle -\langle \vec x,f(\sigma(\vec x))\rangle&=\langle \vec x,f(\vec x)\rangle -\langle \vec x,f(\tau(\vec x))\rangle+\langle \vec x,f(\tau(\vec x))\rangle -\langle \vec x,f(\sigma(\vec x))\rangle\\
&=\langle \vec x,f(\vec x)\rangle -\langle \vec x,f(\tau(\vec x))\rangle+(x_k-x_1)(f(x_2)-f(x_1))\\
&\geq \langle \vec x,f(\vec x)\rangle -\langle \vec x,f(\tau(\vec x))\rangle+\alpha\beta,
\end{align*}
and the result follows from the inductive hypothesis.
\end{proof}

\begin{remark} \label{rem:notallequal}
\emph{It follows immediately that in Proposition \ref{prop:cycle}, if there exists an index $i\in[k]$ such that
$\alpha_i=\min_{j\neq i} |x_{(i)}-x_{(j)}|>0$, and $\beta_i=\min_{j\neq i} |f(x_{(i)})-f(x_{(j)})|>0$, then
$\langle \vec x,f(\vec x)\rangle -\langle \vec x,f(\sigma(\vec x))\rangle\geq\alpha_i\beta_i.$}
\end{remark}

We are now ready to prove Theorem~\ref{thm:mlconsis}.
\begin{proof}[Proof of Theorem~\ref{thm:mlconsis}]
Define
$$X_P:=\tr(AB^\top)-\tr(A(I_m\oplus P)B (I_m\oplus P)^\top)$$
and define $\mathcal{P} = \{ P \in \Pi_{\fu} : \epsilonout{1}(P)>0\}$.
We will show that
$$ \p\left( \exists\,\, P \in \mathcal{P}\text{ s.t. } 
X_P \le 0 \right)
        = O(1/n^2), $$
from which the desired consistency of $\mathcal{L}^{\ML}$ follows
by the Borel-Cantelli Lemma, since this probability is summable in $n$.
Fix $P \in \mathcal{P}$,
and let $\sigma_P \in S_n$ be the permutation associated with  $I_m\oplus P$.
The action of shuffling $B$ via $I_m\oplus P$ is equivalent to permuting the $[n^2]$ elements of $\text{vec}(B)$ via a permutation $\tau_P$, in that
$$\tr(A(I_m\oplus P)B(I_m\oplus P)^\top)=\langle \text{vec}(A),\tau_P(\text{vec}(B))\rangle  .$$
Moreover,
$\tau_P$ can be chosen so that, in the cyclic decomposition of $\tau_P=\tau_P^{(1)}\tau_P^{(2)}\cdots\tau_P^{(\ell)}$, each (disjoint) cycle is acting on a set of distinct real numbers.
Note that Proposition \ref{prop:cycle} implies that the contribution of each cycle $\tau_P^{(i)}$ to $\e(X_P)$ is nonnegative, and the assumptions of Theorem \ref{thm:mlconsis} imply that for each $i,j\in[K]$ such that $i\neq j$, the contribution of each (nontrivial) cycle permuting a $\Lambda_{i,i}$ entry to a $\Lambda_{i,j}$ entry contributes at least $\alpha\beta$ to $\e(X_P)$.
It follows immediately that
\begin{align*}
\e(X_P)&=\e\left(\tr(AB)-\tr(APBP^\top)\right)\\
&=\e\left(\langle \text{vec}(A),\text{vec}(B)\rangle-\langle \text{vec}(A),\tau_P(\text{vec}(B)\rangle\right)\\
&\geq 2\alpha\beta\sum_i\left(\frac{1}{2}\sum_{j} \sum_{k \neq j} \epsilon_{i,j} \epsilon_{i,k}+m_i\epsilonout{i}\right)\\
        &\geq 2\alpha\beta\sum_i\left(\frac{(\fu_i - \epsilonout{i}) \epsilonout{i}}{2}+m_i\epsilonout{i}\right).
\end{align*}
Let $\mathfrak{n}(P)$ be the total number of distinct entries of vec$(B)$ permuted by $\tau_P$, and note that an application of Proposition \ref{prop:cycle} yields
\begin{align*}
\e(X_P)&=\e\left(\tr(AB)-\tr(APBP^\top)\right)\\
&=\e\left(\langle \text{vec}(A),\text{vec}(B)\rangle-\langle \text{vec}(A),\tau_P(\text{vec}(B)\rangle\right)\\
&\geq \frac{1}{2}\mathfrak{n}(P)\gamma\kappa.
\end{align*}
The assumptions in the Theorem also immediately yield that
$$\mathfrak{n}(P)\geq \sum_k\left(\frac{(\fu_k - \epsilonout{k}) \epsilonout{k}}{2}+m_k\epsilonout{k}\right).$$
We next note that $X_P$ is a sum of $\mathfrak{n}(P)$ independent random variables, each bounded in $[-c,c]$.
An application of Hoeffding's inequality then yields
\begin{equation*}
\begin{aligned}
\p(X_P \le 0 ) &\le \p\left( |X_P - \e X_P| \ge \e X_P \right)
\le 2\exp\left\{ -\frac{ 2 \e^2 X_P }{ 4c^2 \mathfrak{n}(P) } \right\} \\
&\le 2\exp\left\{ -\frac{  |\e X_P| \kappa\gamma }{ 2c^2 } \right\}
\le
  2\exp\left\{
        -\frac{  \alpha\beta \kappa\gamma }{ 4c^2 }
        \sum_k\left(\frac{(\fu_k - \epsilonout{k}) \epsilonout{k}}{2}+m_k\epsilonout{k}\right)
  \right\} .
\end{aligned}
\end{equation*}

Next, note that
$$|\{P\in\mathcal{P}\text{ s.t. } 
X_P\leq 0\}|=0\text{ iff
 }|\{P\in\mathcal{P}/\sim\text{ s.t. }
 X_P\leq 0\}|=0.$$
Given $\{\epsilon_{k,\ell}\}_{k,\ell=1}^K$ satisfying $\fu_k=\sum_{\ell}\epsilon_{k,\ell}=\sum_{\ell}\epsilon_{\ell,k}$ for all $k\in[K]$, the number of elements $P\in \mathcal{P}/\sim$ with
$\epsilon_{k,\ell}(P)=\epsilon_{k,\ell}$ for all $k,\ell \in[K]$ is at most
\begin{align}
\label{bnd2}
\fu_1^{\sum_{\ell\neq 1}\epsilon_{1,\ell}}\fu_2^{\sum_{\ell \neq 2}\epsilon_{2, \ell }}\cdots \fu_K^{\sum_{\ell \neq K}\epsilon_{K,\ell}}
&=\fu_1^{\fu_1-\epsilon_{1,1}}\fu_2^{\fu_2-\epsilon_{2,2}}\cdots \fu_K^{\fu_K-\epsilon_{K,K}}\notag\\
&=e^{\sum_k(\fu_k-\epsilon_{k,k})\log(\fu_k)}.
\end{align}
The number of ways to choose such a set (i.e. the $\{\epsilon_{k,\ell}\}_{k,\ell}^K$)
is bounded above by
\begin{equation}
\label{bnd3}
\prod_{\substack{k\text{ s.t. } \epsilonout{k}\neq0 }}(\fu_k+K)^K
=e^{\sum_{\substack{k\text{ s.t. } \epsilonout{k}\neq0 }} K\log(\fu_k+K)}.
\end{equation}
Applying the union bound over all $P \in \mathcal{P}/\sim$, we then have
\begin{align}
\label{eq:bnd3}
&\p  \big(\,\exists\, P \in \mathcal{P} \text{ s.t. }
X_P \le 0 \,\big)=\p  \big(\,\exists\, P \in \mathcal{P}/\sim \text{ s.t. }
X_P \le 0 \,\big) \notag\\
&\le \exp\bigg\{        
        -\frac{  \alpha\beta \kappa\gamma }{ 2c^2 }
        \sum_k\left(\frac{(\fu_k - \epsilonout{k}) \epsilonout{k}}{2}+m_k\epsilonout{k}\right) \\
        &\hspace{16mm}+ \sum_k (\fu_k - \epsilon_{k,k}) \log \fu_k
        + \!\!\!\!\!\sum_{\substack{k\text{ s.t. } \epsilonout{k}\neq0 }} \!\!\!\!\!\!K \log( \fu_k + K ) \bigg\} .
\end{align}
It remains for us to establish that the expression inside the
exponent goes to $-\infty$ fast enough to ensure our desired bound.
For each $k$, the contribution to the exponent in (\ref{eq:bnd3}) is
\begin{align}
\label{eq:eachi}
&-\frac{  \alpha\beta \kappa\gamma }{ 2c^2 }
        \left(\frac{(\fu_k - \epsilonout{k}) \epsilonout{k}}{2}+m_k\epsilonout{k}\right)
        + (\fu_k - \epsilon_{k,k}) \log \fu_k
        + \mathbbm{1}\{\epsilonout{k}\neq0\}  K \log( \fu_k + K )\notag\\
        &=-\frac{  \alpha\beta \kappa\gamma }{ 2c^2 }
        \left(\frac{\epsilon_{k,k} \epsilonout{k}}{2}+m_k\epsilonout{k}\right)
        + \epsilonout{k} \log \fu_k
        + \mathbbm{1}\{\epsilonout{k}\neq0\}  K \log( \fu_k + K )
\end{align}
If $\fu_k/2\leq\epsilon_{k,k}<\fu_k$, then
$$\epsilon_{k,k} \epsilonout{k}\geq\frac{\fu_k \epsilonout{k}}{2}=\omega(\epsilonout{k} \log \fu_k), \text{ and }\epsilon_{k,k} \epsilonout{k}\geq\frac{\fu_k \epsilonout{k}}{2}=\omega(K \log (\fu_k+K)),$$
and the contribution to the exponent in (\ref{eq:bnd3}) from $k$, Eq.~(\ref{eq:eachi}), is clearly bounded above by $-2\log(n)$ for sufficiently large $n$.
If $\epsilon_{k,k}\leq\fu_k/2$ then $\epsilonout{k}>\fu_k/2$, and
$$m_k \epsilonout{k}=\omega(\epsilonout{k} \log \fu_k), \text{ and }m_k \epsilonout{k}\geq\frac{m_k\fu_k}{2}=\omega(K \log (\fu_k+K)),$$
and the contribution to the exponent in (\ref{eq:bnd3}) from $k$, Eq.~(\ref{eq:eachi}), is clearly bounded above by $-2\log(n)$ for sufficiently large $n$.
If $\epsilon_{k,k}=\fu_k$, then all terms in the exponent (\ref{eq:eachi}) are equal to $0$.
For sufficiently large $n$, Eq.~(\ref{eq:bnd3}) is then bounded above by
\begin{align*}
\exp \left\{-\!\!\!\!\!\sum_{\substack{k\text{ s.t. } \epsilonout{k}\neq0 }} \!\!\!\!\!\!2\log(n)  \right\}\leq \exp \left\{-2\log(n)  \right\},
\end{align*}
and the result follows.
\end{proof}

Consistency of $\LmlRES$ as claimed in Theorem~\ref{thm:restmlconsis} follows similarly to that of $\Lml$, and we next briefly sketch the details of the proof.

\begin{proof}[Proof of Theorem~\ref{thm:restmlconsis} (Sketch)]
Analogously to the proof of Theorem~\ref{thm:mlconsis}, define
$$X_P:=\tr\left((A^{(1,2)})^\top B^{(1,2)} \right)-\tr\left((A^{(1,2)})^\top B^{(1,2)}P^\top\right).$$
The proof follows {\it mutatis mutandis} to the proof of Theorem~\ref{thm:mlconsis}, with the key difference being that in this case,
\begin{align*}
\e(X_P)&=\e\left(\tr\left((A^{(1,2)})^\top B^{(1,2)} \right)-\tr\left((A^{(1,2)})^\top B^{(1,2)}P^\top \right)\right)\\
&\geq 2\alpha\beta\sum_k m_k\epsilonout{k}.
\end{align*}
Details are omitted for brevity.
\end{proof}

Before proving Theorem~\ref{thm:hatconsistency}
we establish some preliminary concentration results for our estimates $\widehat \Lambda$, and $\hat n_k$, $k\in[K]$.
An application of
Hoeffding's inequality yields that for $k,\ell \in[K]$ such that $k\neq \ell$,
\begin{align}
\label{eq:concen1}
\p\left( \left| \widehat\Lambda_{k,\ell} -\Lambda_{k,\ell}\right|\geq \frac{\sqrt{n\log n}}{m_km_\ell} \right)\leq 2\text{exp}\left\{ -2n\log n\right\},
\end{align}
and for $k\in[K]$,
\begin{align}
\label{eq:concen2}
\p\left(  \left| \widehat\Lambda_{k,k} -\Lambda_{k,k} \right|\geq \frac{\sqrt{n\log n}}{\binom{m_k}{2}} \right)\leq 2\text{exp}\left\{-2n\log n \right\},
\end{align}
and
\begin{align}
\label{eq:concen3}
\p\left(  \left| \nhat_k -n_k \right|\geq t \right)\leq 2\text{exp}\left\{\frac{-2mt^2}{n^2} \right\},
\end{align}
With $\gamma$ defined as in (\ref{eq:gk}), define the events $\calEone_n$ and $\calEtwo_n$ via
$$
\calEone_n =\left\{\forall\,\, \{k,\ell\}\in\binom{[K]}{2},\text{ s.t }|\Lambda_{k,k}-\Lambda_{k,\ell}|>\gamma,\text{ it holds that } \left|\widehat\Lambda_{k,k}-\widehat\Lambda_{k,\ell}\right|>\frac{\gamma}{2}
\right\};
$$
$$
\calEtwo_n=\left\{\forall\,\, k\in[K],\, |\nhat_k-n_k|\leq n_k^{2/3}
\right\}.
$$
Combining (\ref{eq:concen1})--(\ref{eq:concen3}), we see that if for each $k\in[K]$, $n_k=\Theta(n)$,
$\min_k m_k=\omega(\sqrt{n_k}\log(n_k))$,
then for sufficiently large $n$,
\begin{align}
\label{eq:e1e2}
\p\left((\calEone_n\cup\calEtwo_n)^c\right)\leq e^{-2\log n} .
\end{align}

We are now ready to prove Theorem~\ref{thm:hatconsistency},
proving the consistency of $\Lml$ when the model parameters are unknown.

\begin{proof}[Proof of Theorem~\ref{thm:hatconsistency}]
Let $\widehat B$ be our estimate of $B$ using the seed vertices; i.e., there are $\nhat_k$ vertices from block $k$ for each $k\in[K]$, and for each $k,\ell \in[K]$, the entry of $\widehat B$ between a block $k$ vertex and a block $\ell$ vertex is
$$\log\left(  \frac{ \widehat\Lambda_{k,\ell} }{1-\widehat \Lambda_{k,\ell} }\right).$$
Let $\widehat L$ be the set of distinct entries of $\widehat \Lambda$, and define
\begin{align}
\label{eq:abcest}
&\hat\alpha = \min_{\{k,\ell\}\text{ s.t. } k\neq \ell }|\widehat \Lambda_{k,k} - \widehat \Lambda_{k,\ell}|
\hspace{5mm}
\hat\beta = \min_{\{k,\ell\}\text{ s.t. } k\neq \ell }|\widehat B_{k,k} - B_{k,\ell}|
\hspace{5mm}\hat c=\max_{i,j,k,\ell}|\widehat B_{i,j}-\widehat B_{k,\ell}|,\\
\label{eq:gkest}
&\hspace{20mm}\hat\gamma = \min_{x,y\in \widehat L}|x-y|,\hspace{5mm}\hat\kappa = \min_{x,y\in\widehat  L}\left|\log\left(\frac{x}{1-x}\right)-\log\left(\frac{y}{1-y}\right)\right|.
\end{align}
Note that conditioning on $\calEone_n\cup\calEtwo_n$ and assumption {\it iv.} ensures that each of $\hat\alpha$, $\hat\beta$, $\hat c$, $\hat\gamma$, and $\hat\kappa$ is bounded away from $0$ by an absolute constant for sufficiently large $n$.
For each $k\in[K],$ define
\begin{align}
\label{eq:eest}
&\mathfrak{e}_k:=|\nhat_k -n_k|=|\mathfrak{\hat u}_k-\fu_k|, \hspace{5mm} \mathfrak{e}=\sum_k\mathfrak{e}_k,\hspace{5mm}\eta_k:=\min(n_k,\nhat_k),\hspace{5mm} \eta=\sum_k\eta_k,
\end{align}
and note that conditioning on $\calEone_n\cup\calEtwo_n$ ensures that $\fe_k=O(n_k^{2/3})$ for all $k\in[K].$
An immediate result of this is that, conditioning on $\calEone_n\cup\calEtwo_n$, we have that $\eta_k=\Theta(n_k)=\Theta(n)$ for all $k\in[K]$.

Define $\mathcal{P} := \{ P \in \Pi_{\fu} : \epsilonout{1}(P)>n^{2/3}\log n\}$,
and for $P\in\Pi_\fu$, define
$$X_P:=\tr(A\tilde B^\top)-\tr(A(I_m\oplus P)\tilde B (I_m\oplus P)^\top).$$
We will
show that
$$ \p\left( \exists\,\, P \in \mathcal{P}\text{ s.t. } 
X_P \le 0 \right)
        = O(1/n^2), $$
and the desired consistency of $\Lml$ follows immediately.
To this end, decompose $A$ and $B$ as
\[
  A = \kbordermatrix{
    & \eta & \mathfrak{e}  \\
    \eta & A^{(c,c)} & A^{(c,e)} \\
    \fe & A^{(e,c)} & A^{(e,e)}
  }\hspace {10mm}   B = \kbordermatrix{
    & \eta & \fe  \\
    \eta & B^{(c,c)} & B^{(c,e)} \\
    \fe & B^{(e,c)} & B^{(e,e)}
  },
\]
where $A^{(c,c)}$ (resp., $B^{(c,c)}$) is an $\eta\times\eta$ submatrix of $A$ (resp., $B$)---which contains the seed vertices in $A$---with exactly $\eta_k$ vertices (resp., labels) from block $k$ for each $k\in[K]$.
We view $A^{(c,c)}$ as the ``core'' matrix of $A$ (with $A^{(e,e)}$ and $A^{(c,e)}$ being the ``errorful'' part of $A$), as $A^{(c,c)}$ is a submatrix of $A$ that we could potentially cluster perfectly along block assignments.
Note that similarly decomposing $P$ as
\[
  P= \kbordermatrix{
    & \eta & \mathfrak{e}  \\
    \eta & P^{(c,c)} & P^{(c,e)} \\
    \fe & P^{(e,c)} & P^{(e,e)}
  },
\]
we see that there exists a principal permutation submatrix of $P^{(c,c)}$ of size $(\eta-2\fe)\times(\eta-2\fe)$, which we denote $\tilde P$ (with associated permutation $\tilde\sigma$).
This matrix represents a subgraph of the core vertices of $A$ mapped to a subgraph of the core vertices in $B$.
We can then write $P=\tilde P\oplus Q,$ where $Q\in\Pi_{3\fe}.$
For each $k,\ell \in[K],$ let
$$\tilde\epsilon_{k,\ell}=\tilde\epsilon_{k,\ell}(\tilde P)=|\{v\in U_k\text{ s.t. }\tilde\sigma(v)\in U_k   \}|$$
Consider now
\begin{align}
\label{eq:Xpest}
X_P&=\tr(A(I_{\eta-3\fe}\oplus Q)B(I_{\eta-3\fe}\oplus Q)^\top)-\tr(A(\tilde P\oplus Q)B(\tilde P\oplus Q)^\top).
\end{align}
Letting $\mathfrak{\tilde u}_k$ denote the number of vertices from
the $k$-th block acted on by $\tilde P$, our assumptions yield
\begin{align*}
\e(X_P)\geq 2\hat\alpha\hat\beta\sum_k\left(\frac{(\mathfrak{\tilde u}_k - \tepsilonout{k}) \tepsilonout{k}}{2}+m_k\tepsilonout{k}\right)-\Theta(\eta\fe)-\Theta(\fe^2).
\end{align*}
Let $\mathfrak{\tilde n}(P)$ be the total number of distinct entries of vec$(B^{(c,c)})$ permuted by $\tilde P$, and note that another application of Proposition \ref{prop:cycle} yields
\begin{align*}
\e(X_P)\geq \frac{1}{2}\mathfrak{\tilde n}(P)\hat\gamma\hat\kappa-\Theta(\eta\fe)-\Theta(\fe^2).
\end{align*}
The assumptions in the Theorem also immediately yield that
$$\mathfrak{\tilde n}(P)\geq \sum_k\left(\frac{(\tfu_k - \tepsilonout{k}) \tepsilonout{k}}{2}+m_k\tepsilonout{k}\right).$$
We then have that there exists a constants $c_1>0$ and $c_2>0$ such that
\begin{align}
\label{eq:bnd4}
&\p  \big(\,\exists\, P \in \mathcal{P} \text{ s.t. }
X_P \le 0 \,\big|\,\calEone_n\cup\calEtwo_n\big)=\p  \big(\,\exists\, P \in \mathcal{P}/\sim \text{ s.t. }
X_P \le 0 \,\big|\, \calEone_n\cup\calEtwo_n\big) \notag\\
&\le \exp\bigg\{        
        -\frac{  \hat \alpha\hat \beta \hat \kappa\hat \gamma }{ 2\hat c^2 }
        \sum_k\left(\frac{(\tfu_k - \tepsilonout{k}) \tepsilonout{k}}{2}+m_k\tepsilonout{k}\right)
        + \Theta(\eta\fe)+\Theta(\fe^2) \notag\\
        &\hspace{16mm}+\sum_k (\tfu_k -\tilde\epsilon_{k,k}) \log \tfu_k
        +~~~~\sum_{\substack{k\text{ s.t. } \tepsilonout{k}\neq0 }} \!\!\!\!\!\!K \log( \tfu_k + K )+O(\fe\log\fe)\bigg\} \notag\\
         &=\exp\bigg\{   
        -c_1
        \sum_k\left(\frac{(\tfu_k - \tepsilonout{k}) \tepsilonout{k}}{2}+m_k\tepsilonout{k}\right) \\
         &\hspace{16mm}+\sum_k \tepsilonout{k} \log \tfu_k
        + \!\!\!\!\sum_{\substack{k\text{ s.t. } \tepsilonout{k}\neq0 }} \!\!\!\!\!\!K \log( \tfu_k + K )+ \Theta(n\fe)\bigg\}\notag\\
        &\leq\text{exp}\{-c_2 n^{7/4}\log n \}.
\end{align}
Unconditioning Equation~\eqref{eq:bnd4} combined with Equation~\eqref{eq:e1e2} yields the desired result.
\end{proof}

\begin{proof}[Proof of Theorem~\ref{thm:rhatconsistency} (Sketch)]
The proof of Theorem~\ref{thm:rhatconsistency} is a straightforward combination of the proofs of Theorems \ref{thm:restmlconsis} and \ref{thm:hatconsistency} once we have defined
$$\mathcal{P} := \{ P \in \Pi_{\fu} : \epsilonout{1}(P)>n^{8/9}\log n\}. $$
Details are omitted for the sake of brevity.
\end{proof}

\vskip 0.2in
\bibliographystyle{plainnat}
\bibliography{VN_ML}

\end{document}